\DeclareMathAlphabet\mathbfcal{OMS}{cmsy}{b}{n}
\def\Policy{\pi}
\def\PolicyTS{\Policy^{\operatorname{TS}}}
\def\PolicyROFUL{\Policy^{\operatorname{ROFUL}}}
\def\PolicyOFUL{\Policy^{\operatorname{OFUL}}}
\def\PolicySG{\Policy^{\operatorname{SG}}}
\def\Arm{A}
\def\PolicySet{\mathsf{P}}
\def\ArmDomain{\mathbfcal{\Arm}}
\def\ArmSet{\mathcal{\Arm}}
\def\ArmBound{\mathbf{a}}
\def\ArmNum{k}
\def\ChosenArm{\widetilde{\Arm}}
\def\OptimalArm{\Arm^\star}
\def\Reward{\mathcal{R}}
\def\Regret{\operatorname{Regret}}
\def\History{\mathcal{F}}
\def\HistoryPlus{\History}
\def\Uncertainty{\mathsf{V}}
\def\Complexity{\mathsf{K}}
\def\Param{\Theta^\star}
\def\Entropy{\mathsf{H}}
\def\UpperConf{\mathsf{U}}
\def\LowerConf{\mathsf{L}}
\def\GainRate{\mathsf{G}}
\def\GainRate*[#1]{\GainRate_{#1}}
\def\GapLevel{\delta}
\def\GapProb{\mathsf{q}}
\def\GapProb*[#1]{\GapProb_{#1}}
\def\Deviation{\mathsf{D}}
\def\Deviation*[#1]{\Deviation_{#1}}
\def\Gap{\mathsf{\Delta}}
\def\IGap{\mathbb{G}}
\def\BigRegret{\mathbb{B}}
\def\OptProb{\mathsf{p}}
\def\OptCoef{\omega}
\def\BaseLevel{\mathsf{B}}
\def\MeanReward{\mathsf{M}}
\def\Worth{\widetilde{\MeanReward}}
\def\ITypical{\mathbb{T}}
\def\ITypicalM{\ITypical^\star}
\def\ITypicalW{\widetilde{\ITypical}}
\def\eps{\varepsilon}
\def\semdefleq{\preccurlyeq}
\def\Inflation{\iota}
\def\TsSample{\widetilde{\Theta}}
\def\ParamBound{\boldsymbol{\theta}}
\def\SieveRate{\alpha}
\def\ParamTail{\xi}
\def\DivMatrix{\Upsilon}
\def\DivParam{\upsilon}
\def\DivCovOp{\psi}
\def\NearOptimal{\mathbf{W}}
\def\GroupSpace{\mathbf{Z}}
\def\Orthsum{\oplus}
\def\LinExpansion{\mathbb{L}}
\def\IOptim{\mathbb{O}}
\def\GeneralMargin{\tau}
\def\SigmaAlgebra{\text{\textsigma}}
\def\Eye{\mathbf{I}}
\def\Dim{d}
\def\CovMatrix{\mathbf{\Gamma}}
\def\SymCovMatrix{\mathbf{\Sigma}}
\def\Estimator{\widehat{\Theta}}
\def\Radius{\rho}
\def\GroupedRadius{\eta}
\def\Extreme{\mathsf{E}}
\def\Profile{\mathsf{\Pi}}
\def\BernoulliFamily{\mathcal{B}}
\def\Iset{\mathcal{I}}
\def\II{\mathbb{I}}
\begin{document}



\RUNTITLE{
General Framework for Linear Bandit
}

\TITLE{
A General Theory of the Stochastic Linear Bandit and Its Applications
}


\ARTICLEAUTHORS{%
	\AUTHOR{Nima Hamidi}
	\AFF{Department of Statistics, Stanford University, \EMAIL{hamidi@stanford.edu}}
	\AUTHOR{Mohsen Bayati}
	\AFF{
		Graduate School of Business, Stanford University, \EMAIL{bayati@stanford.edu}}
	} 

\ABSTRACT{%
Recent growing adoption of experimentation in practice has led to a surge of attention to multiarmed bandits as a technique to reduce the opportunity cost of online experiments. In this setting, a decision-maker sequentially chooses among a set of given actions,  observes their noisy rewards, and aims to maximize her cumulative expected reward (or minimize regret) over a horizon of length $T$.  
In this paper, we introduce a general analysis framework and a family of algorithms for the stochastic linear bandit problem that includes well-known algorithms such as the \emph{optimism-in-the-face-of-uncertainty-linear-bandit} (OFUL) and \emph{Thompson sampling} (TS) as special cases.
Our analysis technique bridges several streams of prior literature and yields a number of new results. First, our new notion of \emph{optimism in expectation} gives rise to a new algorithm, called \emph{sieved greedy} (SG) that reduces the overexploration problem in OFUL. SG utilizes the data to discard actions with relatively low uncertainty and then choosing one among the remaining actions greedily. In addition to proving that SG is theoretically rate optimal, our empirical simulations show that SG outperforms existing benchmarks such as greedy, OFUL, and TS.
The second application of our general framework is (to the best of our knowledge) the first polylogarithmic (in $T$) regret bounds for OFUL and TS, under similar conditions as the ones by \cite{goldenshluger2013linear}.  Finally, we obtain sharper regret bounds for the $k$-armed contextual MABs by a factor of $\sqrt{k}$.
}%


\KEYWORDS{Stochastic Linear Bandit, Contextual Bandit, Thompson Sampling, Optimism in the Face of Uncertainty, Greedy Algorithms}

\maketitle
%
\section{Introduction}
\label{sec:intro}

Recently, multiarmed bandit (MAB) experiments
have received extensive attention due to their potential for reducing the opportunity cost of running online experiments \citep{scott2010modern,scott2015multi,johari2017peeking}. Specifically,  MAB experiments allow adaptive adjustments to the design of the experiments, based on partially available data during the experiment. The MAB approach was first motivated by the cost of experimentation in clinical trials \citep{thompson1933likelihood,lai1985asymptotically}.

More formally, in a MAB problem, a decision-maker, also known as the \emph{policy} or \emph{algorithm}, sequentially chooses actions from given action sets and receives rewards corresponding to the selected actions. The goal is to maximize the cumulative reward throughout the experimentation periods, by utilizing the \emph{history} of previous observations.  Alternatively, the aim is to choose a policy that minimizes the cumulative \emph{regret}, which is the difference between the highest achievable reward by a clairvoyant decision-maker who knows the expected reward of each action relative to the reward obtained by the policy.
This paper considers a variant of this problem, called \emph{stochastic linear bandit}, in which all actions are elements of $\IR^d$ for a positive integer $d$ and the expected value of the reward depends on the actions via a linear function. This class of problems includes the well-known subclass of $k$-armed contextual MABs as a special case, when the action sets are allowed to be time-dependent.

Since its introduction by \cite{naoki1999associative}, the linear bandit problem has attracted a great deal of attention. Several algorithms based on the idea of optimism or upper confidence bound (UCB), due to \cite{lai1985asymptotically}, have been proposed and analyzed. Notable examples are by \cite{auer2003using,dani2008stochastic,rusmevichientong2010linearly,abbasi2011improved}, and \cite{lattimore2015pareto}. The best algorithm in this class is the optimism-in-the-face-of-uncertainty-linear-bandit (OFUL) algorithm of \cite{abbasi2011improved} with the regret $\Order[\big]{d\sqrt{T}\log^{3/2}T}$ that matches the best lower bound due to \cite{dani2008stochastic} up to logarithmic factors.

A second line of research examines the performance of Thompson sampling (TS) or posterior sampling, a Bayesian heuristic due to \cite{thompson1933likelihood} that employs the posterior distribution of the reward function to balance exploration and exploitation and reduce regret.
\cite{russo2014learning,dong2018information} proved an $\Order[\big]{d\sqrt{T}\log T}$ upper bound for the \emph{Bayesian} regret of TS, thereby indicating its near-optimality. 

In addition, when there is a deterministic gap $\delta$ between the expected rewards of the top two actions, OFUL and TS are shown to have a regret with a $\text{poly}(\log T)/\delta$ dependence in $T$ instead of a $\sqrt{T}\log T$ one. But this bound is not applicable when $\GapLevel$ is exactly zero.  In fact, this happens for the well-known subclass of linear $k$-armed contextual MABs.  Hence, the prior $\log T/\GapLevel$ bounds for OFUL or TS are not applicable. One needs a more general (probabilistic) notion of the gap to study these types of problems.  This is in fact the subject of the third stream of research, pioneered by \cite{goldenshluger2013linear}, that leverages a so-called \emph{margin condition} to model probabilistic reward $\GapLevel$. They showed that the best lower bound for the contextual MAB is logarithmic in $T$, and proposed a variant of the $\epsilon$-greedy algorithm that achieves this bound. This idea was extended by \cite{bastani2020online} to settings where contexts are high-dimensional (i.e., $d$ becomes very large). However, both of these papers propose algorithms that require an input parameter $h$ to adjust for the probabilistic gap.
\emph{It is an open problem whether such logarithmic (in $T$) bounds for OFUL and TS, that do not take any gap parameter as input can be proved, under the same conditions as in \citep{goldenshluger2013linear}}. In addition, while the first two streams of the aforementioned research were mostly united by the results of \cite{russo2014learning} and \cite{abeille2017linear} that connected OFUL and TS, there was a disconnect between them and the third stream of research.

\paragraph{Contributions.} In this paper we propose an analysis framework for the stochastic linear bandit problem that bridges all three aforementioned streams of literature and yields a number of new results. To be explicit, the main contributions of this paper are as follows:
\begin{enumerate}
	\item  We propose a general family of algorithms, called randomized OFUL (ROFUL), for the stochastic linear bandit problem and prove that they are rate optimal (their regret matches the best lower bound). We also show that OFUL and TS are special cases of this family of algorithms and that our regret bound for ROFUL recovers previously known rate-optimal regret bounds for OFUL and TS in Bayesian and frequentist settings, respectively.
	
	\item Most importantly, we employ the margin assumption of \cite{goldenshluger2013linear} to model a probabilistic gap that allows $\GapLevel$ to be zero, in order to obtain a polylogarithmic (in $T$) regret bound for OFUL and TS when the action sets are independently drawn from an unknown distribution; this includes the contextual MAB problem as in \citep{goldenshluger2013linear}. To the best of our knowledge, this result had not been known for OFUL and TS.

	\item Our analysis of ROFUL naturally leads us to introduce a new rate-optimal policy, \emph{Sieved Greedy} (SG), that leverages data to
	reduce the overexploration problem in OFUL and TS. A key technique to achieve this is to employ a more general form of the optimism principle that we introduce, called \emph{optimism in expectation}, that 
allows data-driven exploration by focusing only on actions with sufficient uncertainty, and then choosing one of them greedily. Our empirical simulations show that SG outperforms greedy, OFUL, and TS.
	
	While SG has the same spirit as recent literature on algorithms that put more emphasis on greedy decision-making \citep{bastani2017mostly,kannan2018smoothed,raghavan2018externalities,hao2019adaptive,bayati2020unreasonable}, it relies on the notion of optimism in expectation that is fundamentally a different idea compared to the ones powering the prior literature (e.g., covariate diversity or large number of arms). Investigating potential connections between all these algorithms is a tantalizing direction for future research.
	
	\item Motivated by the fact that the $k$-armed $d$-dimensional contextual MAB problem is a special case of the $kd$-dimensional stochastic linear bandit, see \cite{abbasi-yadkori2012online} for the reduction, we formulate a slightly more general version of the stochastic linear bandit that we refer to as the \emph{grouped linear bandit} (GLB). Then, using the structure of GLB, we obtain sharper regret bounds (by a factor $\sqrt{k}$) for our general ROFUL algorithm. Therefore, we obtain sharper regret bounds for OFUL and TS than those that can be obtained by directly applying the existing approaches studied by \cite{abbasi2011improved,russo2014learning,agrawal2013thompson}, and \cite{abeille2017linear}.
	
\end{enumerate}

\subsection{Other literature}

Some of the components in our analysis of the ROFUL algorithm have similarities with prior literature \citep{srinivas2010gaussian,russo2016information,kirschner2018heteroscedastic}. Specifically, our notion of uncertainty complexity is similar to the notion of maximum information by \cite{kirschner2018heteroscedastic}. We discuss the differences between the two in \cref{rem:uncertainty-similar-to-KR18}, but in summary our approach provides regret bounds for the more general probabilistic $\GapLevel$ setting as well. In addition, our notion of gain rate is similar to the notion of information ratio by \cite{russo2016information}. We discuss the differences between the two in \cref{rem:gain-rate-similar-to-RV16}, but in summary  \cite{russo2016information} consider a Bayesian setting while we consider both Bayesian and frequentist settings. 

\subsection{Organization}

We introduce notation and the problem formulation in \cref{sec:setting}. Then we introduce
uncertainty complexity and its connection to regret in Sections \ref{sec:uncertainty} and \ref{sec:regret}. Our ROFUL algorithm and its regret analysis are presented in \cref{sec:optimism}. In \cref{sec:examples}, we first demonstrate how OFUL and TS are special cases of ROFUL and then introduce our rate-optimal SG algorithm, which is empirically compared with existing benchmarks in \cref{sec:simulation}. Finally, in \cref{sec:margin}, we provide extensions of our results to obtain polylogarithmic regret bounds for OFUL and TS and sharper bounds for the $k$-armed contextual bandit problem. 

\section{Problem Formulation and Notation}
\label{sec:setting}
For any positive integer $n$, we denote $\{1,2,\cdots,n\}$ by $[n]$. The transpose of a vector $A$ is denoted by $A^\top$. For positive semidefinite matrix
$\SymCovMatrix\in\IR^{n\times n}$ and for any vector $\Arm\in\IR^n$,
notation  $\Norm{\Arm}_{\SymCovMatrix}$ refers to $\sqrt{\Arm^\top\SymCovMatrix \Arm}$. We also use  notation $\Eye_\Dim$ to denote the $d$-by-$d$ identity matrix.

Let $\ArmDomain$ be the set of all possible actions and let $(\ArmSet_t)_{t=1}^T$ be a sequence of $T$ random subsets of $\ArmDomain$, where $T\in\IN$ will be referred to as the time horizon. A policy $\Policy$ sequentially interacts with this environment in $T$ rounds. At time $t\in[T]$, the action set $\ArmSet_t$ is revealed to the policy and it chooses an action $\ChosenArm_t\in\ArmSet_t$ and receives a stochastic reward $\Reward(\ChosenArm_t)$.
We also assume that $\ArmDomain\subset\IR^d$ and is bounded; that is,
there exists a positive constant $\ArmBound$ such that $\NormTwo{\Arm}\leq\ArmBound$ for all $\Arm\in\ArmDomain$. Moreover, we assume there exists a random vector $\Param\in\IR^d$ for which
\begin{equation}\label{eq:linearity}
\Expect[]{\Reward(\Arm)\Given\Param}=\Inner{\Param,\Arm}\,,
\end{equation}
for all $\Arm\in\ArmSet_t$
where $\Inner[\big]{\cdot,\cdot}$ is the standard dotproduct on $\IR^d$. We also assume that there exists a positive parameter $\ParamBound$ such that distribution of $\Param$ satisfies
\begin{align}
	\forall \ParamTail>0~,~~~\Prob{\Norm{\Param}_2\geq\ParamBound+\ParamTail}\leq e^{-\ParamTail^2/2}\,.\label{eq:param-bound}
\end{align}
For example, if $\Param$ is bounded then \cref{eq:param-bound} easily holds. Another important setting where \cref{eq:param-bound} holds is when $\Param$ has a normal distribution.

Next, we introduce the notation
\[
\MeanReward_t(\Arm):=\Inner[\big]{\Param, \Arm}\,,
\]
and assume that there exists (random) optimal action $\OptimalArm_t\in\ArmSet_t$ such that the following holds almost surely for all $\Arm\in\ArmSet_t$,
\begin{align}
    \MeanReward_t(\OptimalArm_t)
    \geq
    \MeanReward_t(\Arm)\,.\label{eq:optimal-reward}
\end{align}
Now, consider the sequence of $\SigmaAlgebra$-algebras $\History_0\subseteq\History_1\cdots\subseteq\History_{t-1}$ that encode history of observations up to time $t$ and are defined by
\begin{align*}
    \History_{0}:=\SigmaAlgebra(\emptyset)
    ~~~~\text{and}~~~~
    \History_{t-1}:=\SigmaAlgebra(\ArmSet_1,\ChosenArm_1,\Reward(\ChosenArm_1),\ldots,\ArmSet_{t-1},\ChosenArm_{t-1},\Reward(\ChosenArm_{t-1}),\ArmSet_t)\,.
\end{align*}
%
In this model, a \emph{policy} $\Policy$ is formally defined as a deterministic function that maps $\HistoryPlus_{t-1}$ to an element of $\ArmSet_t$.

Moreover, for each chosen action $\ChosenArm_t\in\ArmSet_t$, its stochastic reward $\Reward(\ChosenArm_t)$ is equal to $\Inner[\big]{\Param, \ChosenArm_t}+\eps_t$ where, conditional on $\History_{t-1}$, the noise random variable $\eps_t$ has mean zero and is $\sigma^2$-sub-Gaussian. Specifically, $\Expect*{\,|\eps_t|\Given\History_{t-1}}<\infty$ and
\[
\Expect*{\exp\left(\kappa\,\eps_t\right)\Given\History_{t-1}}\leq  \exp\left(\frac{\sigma^2\kappa^2}{2}\right)\,,~~~\forall~\kappa\in\IR\,.
\]

The performance measure for evaluating the policies is the standard cumulative Bayesian regret defined as
\begin{align*}
    \Regret(T,\pi):=\sum_{t=1}^{T}\Expect*{\sup_{A\in\ArmSet_t}\MeanReward_t(A)-\MeanReward_t(\ChosenArm_t)}\,.
\end{align*}
The expectation is taken with respect to the entire randomness in our model, including the prior distribution of $\Param$.
Although we assumed that $\Param$ is random, and we described a  Bayesian formulation of regret, our model and results include the deterministic $\Param$ setting as well. This can be achieved by considering the prior distribution to be the distribution with a point mass at $\Param$.

\paragraph{Action sets.}
%
Action sets and their structure play key roles in this paper which require introducing a number of important notions associated with them.
We start by defining the \emph{extremal points} of an action set.
\begin{defn}[Extremal points]
For an action set $\ArmSet$, define its extremal points $\Extreme(\ArmSet)$ to be all $\Arm'\in\ArmSet$ that are not a convex combination of other actions in $\ArmSet$, i.e., actions in $\ArmSet$ for which one cannot find actions $\{\Arm_1,\cdots,\Arm_n\}\subseteq\ArmSet\setminus\{\Arm'\}$ and coefficients $\{c_1,\cdots,c_n\}\subset[0,1]$ satisfying
\begin{align*}
    \Arm'=\sum_{i=1}^nc_i\Arm_i
    ~~~~\text{and}~~~~
    \sum_{i=1}^nc_i=1.
\end{align*}
\end{defn}
The importance of this definition is that all the algorithms studied in this paper choose only extremal points in action sets, because of the linearity assumption on the mean reward as stated in \cref{eq:linearity}. This observation implies that the rewards attained by any of these algorithms, when provided with the action set $\ArmSet$, belong to the \emph{reward profile} of $\ArmSet$ defined by
\begin{align*}
    \Profile_\ArmSet:=\left\{\MeanReward_t(A):\Arm\in\Extreme_\ArmSet\right\}.
\end{align*}
Recall from \cref{eq:optimal-reward} that $\MeanReward_t(\OptimalArm_t)$ is the maximum attainable reward of an action set $\ArmSet$. Building on this, we define \emph{gap} of an action set $\ArmSet$ as
\begin{align*}
    \Gap_\ArmSet:=\MeanReward_t(\OptimalArm_t)-\sup\left(\Profile_\ArmSet\setminus\{\MeanReward_t(\OptimalArm_t)\}\right)\,.
\end{align*}
Moreover, for any $z\geq0$, we define
\begin{align*}
    \ArmSet_t^z:=\left\{\Arm\in\ArmSet_t: \MeanReward_t(\Arm)\geq\MeanReward_t(\OptimalArm_t)-z \right\}\,.
\end{align*}
In the sequel, we may simplify the above notation and use subscript $t$ instead of $\ArmSet_t$. For instance, $\Gap_t$ refers to $\Gap_{\ArmSet_t}$. We now define a \emph{gapped} problem as follows:
\begin{defn}[Gapped problem]
\label{def:gapped-problem}
We call a linear bandit problem \emph{gapped} if for some positive numbers $\GapLevel$ and $    \GapProb*[\GapLevel]$, the following inequality holds:
\begin{align}
    \Prob{\Gap_t\leq\GapLevel}
    \leq
    \GapProb*[\GapLevel]
    ~~~~~~
    \text{for all $t\in[T]$\,,}
    \label{eq:gap-def}
\end{align}
where the probability is calculated with respect to the randomness of the action sets.
Moreover, for a fixed gap level $\delta$, we define $\IGap_t$ to be the indicator of the event $\{\Gap_t\geq\GapLevel\}$.
\end{defn}
\begin{rem}
	The above notion of gap is more general than the well-known notion of gap in the literature, as in  \citep{abbasi2011improved}, which is a deterministic concept. Specifically, we do not assume that the probability $\GapProb*[\GapLevel]$ is equal to $0$, for a fixed $\delta>0$.
\end{rem}
\begin{rem}
All problems are gapped for all $\GapLevel>0$ and $\GapProb*[\GapLevel]=1$ since \cref{eq:gap-def} will be trivially satisfied. This observation will help us obtain gap-independent bounds.
\end{rem}

\section{Uncertainty Complexity}
\label{sec:uncertainty}

In this section we introduce the notion of \emph{uncertainty structure}, which will be a key parameter in obtaining regret bounds in subsequent sections. We also calculate this parameter in three examples to help build intuition.

By uncertainty structure, we simply refer to a sequence of functions $\Uncertainty_t:(\HistoryPlus_{t-1},\Arm)\mapsto\IR$, where $\Arm\in\ArmSet_t$. By a slight abuse of notation, for any policy $\Policy$, we define \emph{expected uncertainty} to be
\begin{align*}
    \Uncertainty\left(\Policy\right)
    :=
    \Expect*{\sum_{t=1}^T\Uncertainty_t(\ChosenArm_t)}\,.
\end{align*}
Finally, for a set of policies $\PolicySet$, the \emph{uncertainty complexity} is defined as
\begin{align*}
    \Complexity
    :=
    \sup_{\Policy\in\PolicySet}\Uncertainty(\Policy)\,.
\end{align*}
Note that uncertainty complexity \emph{is not a unique quantity} for a given problem as the choice of functions $\Uncertainty_t$ can vary. We will see in the following sections that any uncertainty structure, together with an associated \emph{gain rate} that is defined in \cref{sec:regret}, can be used to provide an upper bound for the regret of any policy.  However, the quality of the regret bound does depend on the choice of uncertainty structure.

In order to get a better of sense of uncertainty complexity, in the remainder of this section we provide upper bounds for the uncertainty complexity of several well-known problems. We then use these bounds in \cref{sec:examples} to derive rate-optimal regret bounds for OFUL and variants of TS.
Overall,  the optimal selection of an uncertainty structure is an interesting and challenging research question, but one that is well beyond the scope of this paper.
\begin{rem}\label{rem:uncertainty-similar-to-KR18} The above notion of uncertainty complexity is similar to the notion of \emph{maximum information} by \cite{kirschner2018heteroscedastic}, see their Eq. (2). The main difference is that we do not require the essential supremum of $\sum_t\Uncertainty_t$ to exist. This makes our analysis simpler; see, e.g., the second paragraph on page 6 of \citep{kirschner2018heteroscedastic}. Also, in contrast to \cite{kirschner2018heteroscedastic}, our proof technique provides regret bounds for the (generalized) gapped version of the problem as well.
\end{rem}

%
\begin{exmp}[Unstructured linear bandit]
\label{exa:unstr-lin-bandit}
Let $\lambda$ be a positive and fixed real number and, for any $t\in[T]$, define
\begin{align}
    \SymCovMatrix_t
    :=
    \left(\frac1\lambda\Eye_\Dim+\frac1{\sigma^2}\sum_{s=1}^{t}\ChosenArm_s\ChosenArm_s^\top\right)^{-1}.
    \label{eq:sym-cov-matrix-def}
\end{align}
Then, we choose the following uncertainty structure:
\begin{align*}
    \Uncertainty_t(\Arm)
    :=
    \min\left\{\sigma^2,\Norm{\Arm}_{\SymCovMatrix_{t-1}}^2\right\}.
\end{align*}
Lemmas 10 and 11 of \cite{abbasi2011improved} essentially prove that
\begin{align}
    \Complexity
    \leq
    2\,\sigma^2 d\log\left(1+\frac{T\ArmBound^2\lambda}{\Dim\sigma^2}\right).
    \label{eq:unstr-lin-bandit-complexity-bound}
\end{align}
\end{exmp}
\begin{exmp}[Bayesian linear bandit with fixed finite action sets]
\label{exa:bayes-lin-bandit-finite}
Consider a finite action set $\ArmDomain=\{\Arm_1,\Arm_2,\cdots,\Arm_\ArmNum\}$ that does not change over time. In other words,  $\ArmSet_t=\ArmDomain$ for all $t\in[T]$ almost surely. Following a similar notation as  \cite{russo2016information}, for all $j\in[k]$, we let
\begin{align*}
    \alpha_{t,j}:=\Prob{\OptimalArm=\Arm_j\Given\History_{t-1}}
\end{align*}
and
\begin{align*}
    \mu_{t,j}
    :=
    \Expect{\Param\Given\History_t,\OptimalArm=\Arm_j}
    =
    \frac{1}{\alpha_{t,j}}\Expect{\Param\cdot\II(\OptimalArm=\Arm_j)\Given\History_{t-1}}\,.
\end{align*}
Now, defining $
    \mu_{t}
    :=
    \Expect{\Param\Given\History_{t-1}}$, we consider the following uncertainty functions:
\begin{align*}
    \Uncertainty_t(\Arm)
    :=
    \Norm{\Arm}_{\CovMatrix_{t}}^2\,,
\end{align*}
where
\begin{align}
    \CovMatrix_t
    :=
    \sum_{j=1}^{k}\alpha_j(\mu_{t,j}-\mu_{t})(\mu_{t,j}-\mu_{t})^\top.
    \label{eq:cov-matrix-def-finite}
\end{align}
The analysis of \cite{russo2016information} implies that 
\begin{align}
    \Complexity
    \leq
    2\sigma^2\Entropy(\OptimalArm)\,,
    \label{eq:bayesian-finite-complexity}
\end{align}
where $\Entropy(\OptimalArm)$ is the entropy of $\OptimalArm$. For completeness, we provide a slightly modified version of their proof in \cref{sec:auxi}.
\end{exmp}
\begin{exmp}[Bayesian linear bandit with normal prior and noise]
\label{exa:bayes-lin-bandit-normal}
In this example, we focus on the Bayesian setting in which $\Param\sim\Normal{0,\lambda\Eye_d}$, and at round $t$, the reward of selecting action $\ChosenArm_t$ is given by $\Reward(\ChosenArm_t)=\Inner{\Param,\ChosenArm_t}+\eps_t$ where $\eps_t\sim\Normal{0,\sigma^2}$ is independent of $\History_{t-1}$. However, we allow the action sets to change over time and also allow the action sets to have infinite size. Inspired by the previous example, we define
\begin{align*}
    \Uncertainty_t(\Arm)
    :=
    \Norm{\Arm}_{\CovMatrix_{t}}^2,
\end{align*}
where
\begin{align}
    \CovMatrix_t
    :=
    \Cov[\big]{\Expect{\Param\Given\HistoryPlus_{t-1},\OptimalArm_t}\Given\HistoryPlus_{t-1}}\,.
    \label{eq:cov-matrix-def-infinite}
\end{align}
It is easy to see that in the setting of \cref{exa:bayes-lin-bandit-finite}, the above definition is equivalent to \cref{eq:cov-matrix-def-finite}. We now use a different technique to bound the uncertainty complexity. Notice that the normality assumption yields
\begin{align*}
    \CovMatrix_t
    \semdefleq
    \Cov[\big]{\Param\Given\HistoryPlus_{t-1}}
    =
    \SymCovMatrix_t.
\end{align*}
Therefore, \cref{exa:unstr-lin-bandit} implies that
\begin{align}
    \Complexity
    \leq
       2 \sigma^2 d\log\left(1+\frac{T\ArmBound^2\lambda}{\Dim\sigma^2}\right).
    \label{eq:bayesian-normal-complexity}
\end{align}
\end{exmp}

\section{Regret Bound and Gain Rate}
\label{sec:regret}
In this section, building on the notion of uncertainty structure, we introduce the notion of gain rate of any policy and then use that to obtain an upper bound for the regret.
\begin{defn}[Gain rate]
Let $\GapLevel>0$ be fixed. We say that a policy $\Policy$ has gain rate $\GainRate*[\GapLevel]>0$ with respect to an uncertainty structure $\{\Uncertainty_t\}_{t\ge1}$ if
\begin{align}
    \Expect*{
        \left(\MeanReward_t(\OptimalArm_t)-\MeanReward_t(\ChosenArm_t)\right)^2
        \cdot
        \II\left(\MeanReward_t(\OptimalArm_t)-\MeanReward_t(\ChosenArm_t)\geq\GapLevel\right)
    }
    \leq
    \frac1{\GainRate*[\GapLevel]}~
    \Expect*{\Uncertainty_t(\ChosenArm_t)}
    +
    \frac{\Deviation*[\GapLevel]}{2t^2},
    \label{eq:def-gain-rate}
\end{align}
for all $t\in[T]$.
\end{defn}
\begin{rem}
The constant $\Deviation*[\GapLevel]$ is meant to account for very unlikely cases where the observations \emph{deviate} from generic cases (this will be formalized by tail bounds). In most cases, $\Deviation*[\GapLevel]$ can be set to 0 or 1.
\end{rem}
We are ready now to state a general result on the regret of any policy for any gap level $\delta$ that relies on uncertainty complexity $\Complexity$, gain rate $\GainRate*[\delta]$, and $\Deviation*[\GapLevel]$.
\begin{thm} Given an uncertainty structure $\{\Uncertainty_t\}_{t\ge1}$, gap level $\delta$, and associated parameter $\GapProb*[\GapLevel]$, the regret of any policy $\Policy$ satisfies
\label{thm:general-regret}
\begin{align}
    \Regret(T,\Policy)
    \leq
    \frac{\Complexity}{\GapLevel\GainRate*[\GapLevel]}+\frac{\Deviation*[\GapLevel]}{\delta}+T\GapLevel\GapProb*[\GapLevel]\,.\label{eq:regret-general-gapped}
\end{align}
\end{thm}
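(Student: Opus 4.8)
The plan is to split the per-round regret $\MeanReward_t(\OptimalArm_t)-\MeanReward_t(\ChosenArm_t)$ according to whether the instantaneous gap $\Gap_t$ exceeds $\GapLevel$, i.e. whether $\IGap_t=1$, and further, on the event $\IGap_t=1$, whether the realized regret is itself at least $\GapLevel$. Concretely, write
\begin{align*}
    \MeanReward_t(\OptimalArm_t)-\MeanReward_t(\ChosenArm_t)
    =
    \underbrace{\big(\MeanReward_t(\OptimalArm_t)-\MeanReward_t(\ChosenArm_t)\big)\II(\IGap_t=0)}_{\text{(i)}}
    +
    \underbrace{\big(\MeanReward_t(\OptimalArm_t)-\MeanReward_t(\ChosenArm_t)\big)\II(\IGap_t=1)}_{\text{(ii)}}.
\end{align*}
For term (i), by \cref{eq:gap-def} the event $\{\IGap_t=0\}=\{\Gap_t<\GapLevel\}$ has probability at most $\GapProb*[\GapLevel]$; bounding the regret on this event by its maximum value $\GapLevel$ (since every action not equal to $\OptimalArm_t$ has reward profile entry at most $\MeanReward_t(\OptimalArm_t)-\Gap_t$, but more simply since the extremal actions that the algorithm may choose incur regret either $0$ or at least $\Gap_t$ — actually I only need the crude bound that on $\{\IGap_t=0\}$ the chosen action either is optimal or has gap below $\GapLevel$) gives a contribution of at most $\GapLevel\GapProb*[\GapLevel]$ to $\Expect*{\cdot}$ per round, hence $T\GapLevel\GapProb*[\GapLevel]$ over the horizon — the third term in \cref{eq:regret-general-gapped}.

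For term (ii), the key observation is that on the event $\IGap_t=1$ the reward profile of $\ArmSet_t$ has a gap of at least $\GapLevel$ above the second-best value, so any chosen extremal action is either optimal (zero regret) or incurs realized regret at least $\GapLevel$. Therefore $\big(\MeanReward_t(\OptimalArm_t)-\MeanReward_t(\ChosenArm_t)\big)\II(\IGap_t=1)$ equals $\big(\MeanReward_t(\OptimalArm_t)-\MeanReward_t(\ChosenArm_t)\big)\II\big(\MeanReward_t(\OptimalArm_t)-\MeanReward_t(\ChosenArm_t)\geq\GapLevel\big)$, and on the latter event the regret is at least $\GapLevel$, so the regret is bounded by $\frac1\GapLevel$ times its square. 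Taking expectations and summing over $t$, then invoking the gain-rate inequality \cref{eq:def-gain-rate} termwise,
\begin{align*}
    \sum_{t=1}^T\Expect*{\big(\MeanReward_t(\OptimalArm_t)-\MeanReward_t(\ChosenArm_t)\big)\II(\IGap_t=1)}
    &\leq
    \frac1\GapLevel\sum_{t=1}^T\Expect*{\big(\MeanReward_t(\OptimalArm_t)-\MeanReward_t(\ChosenArm_t)\big)^2\II\big(\MeanReward_t(\OptimalArm_t)-\MeanReward_t(\ChosenArm_t)\geq\GapLevel\big)}\\
    &\leq
    \frac1\GapLevel\sum_{t=1}^T\left(\frac1{\GainRate*[\GapLevel]}\Expect*{\Uncertainty_t(\ChosenArm_t)}+\frac{\Deviation*[\GapLevel]}{2t^2}\right)
    \leq
    \frac{\Complexity}{\GapLevel\GainRate*[\GapLevel]}+\frac{\Deviation*[\GapLevel]}{\GapLevel}\cdot\frac{\pi^2}{12},
\end{align*}
where in the last step I use $\sum_{t\ge1}\Expect*{\Uncertainty_t(\ChosenArm_t)}=\Uncertainty(\Policy)\leq\Complexity$ by definition of uncertainty complexity, and $\sum_{t\ge1}t^{-2}=\pi^2/6$. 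Since $\pi^2/12<1$, this is at most $\frac{\Complexity}{\GapLevel\GainRate*[\GapLevel]}+\frac{\Deviation*[\GapLevel]}{\GapLevel}$, matching the first two terms of \cref{eq:regret-general-gapped}. Adding the bound from term (i) completes the proof.

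I expect the main subtlety to be the step that replaces $\II(\IGap_t=1)$ by the realized-regret indicator $\II(\text{realized regret}\geq\GapLevel)$: this requires the structural fact that every algorithm under consideration selects only extremal points of $\ArmSet_t$, so that the realized mean reward lies in the reward profile $\Profile_{\ArmSet_t}$, whose elements are separated from $\MeanReward_t(\OptimalArm_t)$ by a jump of size $\Gap_t$. On $\{\Gap_t\geq\GapLevel\}$, this forces the realized regret to be $0$ or $\geq\GapLevel$, which is exactly what is needed. One should also double-check that term (i) is handled without over-counting, and that the crude bound $\GapLevel$ on the realized regret in term (i) is legitimate — on $\{\Gap_t<\GapLevel\}$ the regret of a chosen extremal action is at most $\Gap_t<\GapLevel$ unless the action is optimal, in which case it is $0$; either way it is bounded by $\GapLevel$, so the factor $T\GapLevel\GapProb*[\GapLevel]$ is correct.
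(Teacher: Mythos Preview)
Your decomposition has a genuine gap in term~(i). On the event $\{\IGap_t=0\}=\{\Gap_t<\GapLevel\}$ you claim that the realized regret of any chosen extremal action is at most $\Gap_t<\GapLevel$. This is backwards: by definition $\Gap_t=\MeanReward_t(\OptimalArm_t)-\sup\big(\Profile_{\ArmSet_t}\setminus\{\MeanReward_t(\OptimalArm_t)\}\big)$ is the gap to the \emph{second-best} extremal reward, so every suboptimal extremal action has regret \emph{at least} $\Gap_t$, not at most. Nothing prevents a third-best arm from having regret far exceeding $\GapLevel$ even when $\Gap_t<\GapLevel$. Consequently the bound $\Expect{(\text{regret})\II(\IGap_t=0)}\le\GapLevel\GapProb*[\GapLevel]$ is unjustified, and your argument leaves the event $\{\Gap_t<\GapLevel,\ \MeanReward_t(\OptimalArm_t)-\MeanReward_t(\ChosenArm_t)\ge\GapLevel\}$ completely uncontrolled.

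The paper avoids this by splitting on the realized regret rather than on the gap: with $\BigRegret_t:=\II\big(\MeanReward_t(\OptimalArm_t)-\MeanReward_t(\ChosenArm_t)\ge\GapLevel\big)$, the $\BigRegret_t$-part is handled exactly as you handle term~(ii) (square and apply the gain-rate inequality), while the $(1-\BigRegret_t)$-part is trivially bounded by $\GapLevel$ pointwise, and only \emph{then} does one invoke the extremal-point structure to observe that $(\text{regret})\cdot(1-\BigRegret_t)=(\text{regret})\cdot(1-\BigRegret_t)\cdot\II(\Gap_t<\GapLevel)$, yielding the factor $\GapProb*[\GapLevel]$. In short, the order of the two indicators matters: bound by $\GapLevel$ first on $\{\text{regret}<\GapLevel\}$, then insert $\II(\Gap_t<\GapLevel)$; your order (insert $\II(\Gap_t<\GapLevel)$ first, then try to bound by $\GapLevel$) does not go through. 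A smaller point: in term~(ii) your claimed equality $(\text{regret})\II(\IGap_t=1)=(\text{regret})\II(\text{regret}\ge\GapLevel)$ is only an inequality $\le$, since the event $\{\Gap_t<\GapLevel,\ \text{regret}\ge\GapLevel\}$ contributes to the right side but not the left; this does not break that step, but it is another symptom of the same missing case.
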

\begin{rem}[Problem-independent bound]\label{rem:prob-ind-bound}
In most examples of this paper we will prove the following stronger variant of \cref{eq:def-gain-rate}:
\begin{align}
    \Expect*{
        \left(\MeanReward_t(\OptimalArm_t)-\MeanReward_t(\ChosenArm_t)\right)^2
    }
    \leq
    \frac1\GainRate~
    \Expect*{\Uncertainty_t(\ChosenArm_t)}
    +
    \frac\Deviation{2t^2}\,.\label{eq:gain-rate-prob-ind}
\end{align}
This inequality, implies that the gain rate is not a function of $\GapLevel$, which means that the regret bound in \cref{eq:regret-general-gapped} holds for any $\GapLevel$. Therefore, one can take the infimum of the right-hand side of \cref{eq:regret-general-gapped} over $\GapLevel$ to get a $\GapLevel$-independent regret bound
\begin{align}
    \Regret(T,\Policy)
    &\leq
    \inf_{\GapLevel>0}\left\{\frac{\Complexity{}}{\GapLevel\GainRate}+\frac{\Deviation}{\delta}+T\GapLevel\GapProb*[\GapLevel]\right\}\nonumber\\
    &\leq
    \inf_{\GapLevel>0}\left\{\frac{\Complexity{}}{\GapLevel\GainRate}+\frac{\Deviation}{\delta}+T\GapLevel\right\}\nonumber\\
    &=
    2\,\sqrt{\left(\frac{\Complexity{}}{\GainRate}+\Deviation\right)\, T}\,.\label{eq:prob-ind-regret-general}
\end{align}
\end{rem}
\begin{rem}\label{rem:gain-rate-similar-to-RV16}
	The above notion of gain rate is similar to the notion of information ratio of \cite{russo2016information}. Specifically, if $\{\Uncertainty_t\}_{t\ge 1}$ is defined as in \cref{exa:bayes-lin-bandit-finite} and $\Deviation=0$, then $1/\GainRate$ becomes the information ratio. We also note that \cite{russo2016information} consider a Bayesian setting while our gain rate is defined for both Bayesian and frequentist settings.
\end{rem}
\begin{proof}[Proof of \cref{thm:general-regret}]
Let $\BigRegret_t$ be the shorthand for the indicator function $\II\left(\MeanReward_t(\OptimalArm_t)-\MeanReward_t(\ChosenArm_t)\geq\GapLevel\right)$. We then have
\begin{align*}
    \Expect*{
        \Reward(\OptimalArm_t)-\Reward(\ChosenArm_t)
    }
    &=
    \Expect*{
        \MeanReward_t(\OptimalArm_t)-\MeanReward_t(\ChosenArm_t)
    }\\
    &=
    \Expect*{
        \left(\MeanReward_t(\OptimalArm_t)-\MeanReward_t(\ChosenArm_t)\right)
        \cdot
        \BigRegret_t
    }+
    \Expect*{
        \left(\MeanReward_t(\OptimalArm_t)-\MeanReward_t(\ChosenArm_t)\right)
        \cdot
        (1-\BigRegret_t)
    }\\
    &\leq
    \frac1\GapLevel
    \Expect*{
        \left(\MeanReward_t(\OptimalArm_t)-\MeanReward_t(\ChosenArm_t)\right)^2
        \cdot
        \BigRegret_t
    }
    +
    \Expect*{
        \left(\MeanReward_t(\OptimalArm_t)-\MeanReward_t(\ChosenArm_t)\right)
        \cdot
        (1-\BigRegret_t)
    }\nonumber\\
    &\leq
    \frac1{\GapLevel\GainRate*[\GapLevel]}~
    \Expect*{\Uncertainty_t(\ChosenArm_t)}
    +
    \frac{\Deviation*[\GapLevel]}{2\GapLevel t^2}
    +
    \Expect*{
        \left(\MeanReward_t(\OptimalArm_t)-\MeanReward_t(\ChosenArm_t)\right)
        \cdot
        (1-\BigRegret_t)
    }\,.
\end{align*}
By summing both sides of the above inequality over $t$, we get
\begin{align}
    \Regret(T,\Policy)
    &=
    \sum_{t=1}^{T}
    \Expect*{
        \Reward(\OptimalArm_t)-\Reward(\ChosenArm_t)
    }\nonumber\\
    &\leq
    \sum_{t=1}^{T}
    \left\{\frac1{\GapLevel\GainRate*[\GapLevel]}~
    \Expect*{\Uncertainty_t(\ChosenArm_t)}
    +
    \frac{\Deviation*[\GapLevel]}{2\GapLevel t^2}
    +
    \Expect*{
        \left(\MeanReward_t(\OptimalArm_t)-\MeanReward_t(\ChosenArm_t)\right)
        \cdot
        (1-\BigRegret_t)
    }\right\}\nonumber\\
    &\leq
    \frac{\Uncertainty(\Policy)}{\GapLevel\GainRate*[\GapLevel]}
    +
       \frac{\Deviation*[\GapLevel]}{\delta}
    +
    \sum_{t=1}^{T}
    \Expect*{
        \left(\MeanReward_t(\OptimalArm_t)-\MeanReward_t(\ChosenArm_t)\right)
        \cdot
        (1-\BigRegret_t)
    }\nonumber\\
    &\leq
    \frac{\Complexity{}}{\GapLevel\GainRate*[\GapLevel]}
    +
    \frac{\Deviation*[\GapLevel]}{\delta}
    +
    \sum_{t=1}^{T}
    \Expect*{
        \left(\MeanReward_t(\OptimalArm_t)-\MeanReward_t(\ChosenArm_t)\right)
        \cdot
        (1-\BigRegret_t)
    }\,. \label{eq:mid-pf-thm-general-regret-to-use-for-poly-log-results}
\end{align}
Finally, note that
\begin{align*}
    \sum_{t=1}^{T}
    \Expect*{
        \left(\MeanReward_t(\OptimalArm_t)-\MeanReward_t(\ChosenArm_t)\right)
        \cdot
        (1-\BigRegret_t)
    }
    &\leq
    \sum_{t=1}^{T}
    \Expect*{
        \GapLevel
        \cdot
        (1-\BigRegret_t)
    }\\
    &=
    \sum_{t=1}^{T}
    \Expect*{
        \GapLevel
        \cdot
        (1-\BigRegret_t)\II(\Gap_t<\GapLevel)
    }\\
    &\leq
    \sum_{t=1}^{T}
    \Expect*{
        \GapLevel
        \cdot
        \II(\Gap_t<\GapLevel)
    }\\
    &\leq
    \sum_{t=1}^{T}
    \GapLevel\GapProb*[\GapLevel]\\
    &=
    T\GapLevel\GapProb*[\GapLevel].
\end{align*}
This completes the proof of \cref{thm:general-regret}.
\end{proof}

\section{ROFUL Algorithm}
\label{sec:optimism}

In this section we generalize the well-known optimism principle that is at the core of the OFUL algorithm of \cite{abbasi2011improved}. Specifically, we introduce the new notion of \emph{optimism in expectation}, which allows us to propose a more general and more flexible version of OFUL, which we call the randomized OFUL (ROFUL) algorithm. We then show how optimism in expectation for a policy leads to a high gain rate and, hence a small regret bound. This allows us to prove a regret bound for ROFUL.
In the next section we will show that, in addition to OFUL, Thompson sampling (TS) is also a special case of ROFUL. We will also see that our regret bound for ROFUL leads to a unified proof of rate optimality for both OFUL and TS.

Before executing the above plan, let us start with a few definitions.
%
\begin{defn}[Confidence bounds]\label{def:conf-bound}
Confidence bounds are real-valued functions $\LowerConf_t(\cdot)$ and $\UpperConf_t(\cdot)$ such that, with probability at least $1-t^{-3}$,
\begin{align*}
    \MeanReward_t(A)\in[\LowerConf_t(\Arm),\UpperConf_t(\Arm)]
    ~~~~~~~~\text{for all $\Arm\in\ArmSet_t$}\,.
\end{align*}
Also, $\ITypicalM_t$ refers to the indicator function for the event that
$\MeanReward_t(A)\in[\LowerConf_t(\Arm),\UpperConf_t(\Arm)]$ holds for all $\Arm\in\ArmSet_t$.
\end{defn}
\begin{defn}[Baseline]\label{def:baselevel}
For confidence bounds  $\LowerConf_t(\cdot)$ and $\UpperConf_t(\cdot)$, the baseline $\BaseLevel_t$ at time $t$ is defined by,
\begin{align*}
\BaseLevel_t
:=
\sup_{\Arm\in\ArmSet_t}\LowerConf_t(\Arm)\,.
\end{align*}
\end{defn}
Next, we state an assumption that allows us to provide results in situations where $\Param$ is unbounded.  In most of the prior literature $\Norm{\Param}_2$ is bounded almost surely, which results in the exclusion of normal priors. The assumption allows us to overcome this constraint.
\begin{asmp}
\label{as:boundedness}
For any constant $\rho\in[ T^{-2},1]$,
let $\BernoulliFamily_{\rho}$ refer to the family of all
Bernoulli random variables $Z$ such that $\Expect{Z}=\rho$. Assume that,
\begin{align*}
    \sup_{Z\in\BernoulliFamily_\rho} \Expect*{\left(\sup_{\Arm\in\ArmSet_t}\MeanReward_t(\Arm)-\inf_{\Arm\in\ArmSet_t}\MeanReward_t(\Arm)\right)^2\cdot Z}
    \leq
    \frac{\Deviation\cdot \rho}{4}\,.
\end{align*}
Note that random variables $Z$ in $\BernoulliFamily_\rho$ can be correlated with $\left(\sup_{\Arm\in\ArmSet_t}\MeanReward_t(\Arm)-\inf_{\Arm\in\ArmSet_t}\MeanReward_t(\Arm)\right)$.
\end{asmp}
The expression $\left(\sup_{\Arm\in\ArmSet_t}\MeanReward_t(\Arm)-\inf_{\Arm\in\ArmSet_t}\MeanReward_t(\Arm)\right)$ in \cref{as:boundedness} is the maximum attainable regret of any policy at time $t$. Applying the Cauchy--Schwarz inequality, we can see that a sufficient condition for \cref{as:boundedness} to hold is that
\[
\sup_{Z\in\BernoulliFamily_\rho} \Expect*{\,\Norm{\Param}_2^2\cdot Z\,}
\leq
\frac{\Deviation\cdot \rho}{4\ArmBound^2}\,.
\]
For example, in the special case where $\Norm{\Param}_2\leq1$ almost surely, the parameter $\Deviation$ can be set to $4\ArmBound^2$.

We are ready now to introduce the ROFUL algorithm.

\paragraph{ROFUL Algorithm.}
ROFUL receives a \emph{worth function} $\Worth_t(\cdot)$ that maps each arm $\Arm\in\ArmSet_t$ and each history instance $\HistoryPlus_{t-1}$ into a real number. The policy then chooses the action with the highest worth. \cref{alg:roful} presents the pseudocode of ROFUL.
\begin{algorithm}[ht]
\caption{Randomized OFUL}
\label{alg:roful}
\begin{algorithmic}[1]
\REQUIRE Worth functions $\left\{\Worth_t(\cdot)\right\}_{t\ge 1}$.
\FOR{$t=1,2,\cdots$}
\STATE Observe $\ArmSet_t$,
\STATE $\ChosenArm_t\gets \Argmax_{\Arm\in\ArmSet_t}\Worth_t(\Arm)$
\ENDFOR
\end{algorithmic}
\end{algorithm}

\paragraph{Regret of ROFUL.} The ROFUL algorithm as formulated in
\cref{alg:roful} may not perform well, unless the worth functions $\Worth_t(\cdot)$ are well behaved. We formally define what ``well behaved'' means by introducing two conditions of \emph{reasonableness} and \emph{optimism}.
Intuitively, an algorithm that explores too much or too little incurs a high regret. Reasonableness  and optimism are mechanisms for controlling these potential flaws, respectively.
To define these notions rigorously, we assume that for each action $\Arm$ we are given upper and lower confidence bounds $\UpperConf_t(\Arm)\geq\LowerConf_t(\Arm)$, where as in \cref{def:conf-bound}, the interval
$[\LowerConf_t(\Arm),\UpperConf_t(\Arm)]$ contains $\MeanReward_t(\Arm)$ with high probability. In \cref{sec:examples}, we provide examples of these confidence bounds for several examples of problems.

We are now ready to define the reasonableness for worth functions.
\begin{defn}[Reasonableness]\label{def:reasonableness}
Given confidence bounds $\LowerConf_t(\cdot)$ and $\UpperConf_t(\cdot)$, a worth function $\Worth_t(\cdot)$ is called \emph{reasonable} if, with probability at least $1-t^{-3}$,
\begin{align*}
    \Worth_t(\Arm)\in[\LowerConf_t(\Arm),\UpperConf_t(\Arm)]
        ~~~~~~~~\text{for all $\Arm\in\ArmSet_t$}\,.
\end{align*}
Moreover, the notation $\ITypicalW_t$ refers to the indicator function for the event that
$\Worth_t(\Arm)\in[\LowerConf_t(\Arm),\UpperConf_t(\Arm)]$  holds for all $\Arm\in\ArmSet_t$.
\end{defn}
As we saw in \cref{def:conf-bound}, the confidence bounds are such that for each arm $\Arm$, the true mean reward $\MeanReward_t(\Arm)$ lies in the confidence interval $[\LowerConf_t(\Arm),\UpperConf_t(\Arm)]$ with high probability. Therefore, reasonableness ensures that the action chosen by ROFUL is close to the best action that ensures that ROFUL does not explore actions unnecessarily.

Next, we  define \emph{optimism in expectation}, which guarantees that ROFUL explores sufficiently.
\begin{defn}[Optimism in expectation]\label{def:optimism-in-exp}
Given confidence bounds $\LowerConf_t(\cdot)$ and $\UpperConf_t(\cdot)$, a worth function $\Worth_t(\cdot)$ is called optimistic with parameter $\OptProb\in(0,1]$, if
\begin{align}
    \Expect*{\Big(\MeanReward_t(\OptimalArm_t)-\BaseLevel_t\Big)^2\cdot\ITypicalM_t}
    \leq
    \frac1\OptProb\Expect*{\Big(\Worth_t(\ChosenArm_t)-\BaseLevel_t\Big)^2\cdot\ITypicalW_t}\,.
    \label{eq:exp-optimism-def}
\end{align}
Note that $\ITypicalM_t$ and $\ITypicalW_t$ are defined as in \cref{def:conf-bound} and \cref{def:reasonableness}, respectively.
\end{defn}
 \cref{fig:worth-function} shows an illustration of the confidence bounds, the baseline, the interval used in optimism, and the worth functions.
 \begin{figure}[th]
 	\centering
 	\includegraphics[scale=0.7]{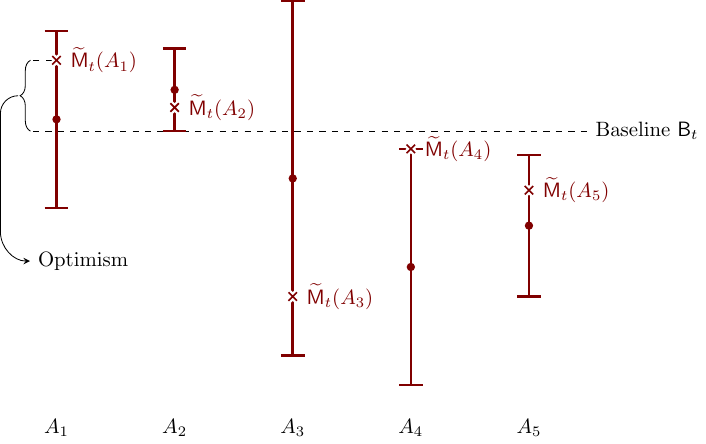}
 	\caption{Illustration of the building blocks of the ROFUL algorithm. Specifically,  confidence bounds, baseline, worth functions, and optimism are shown when $\ArmSet_t=\{\Arm_1,\Arm_2,\ldots,\Arm_5\}$.}
 	\label{fig:worth-function}
 \end{figure}

The above notion requires the ROFUL algorithm to avoid paying the price of pure optimism (as OFUL does). Specifically, OFUL ensures that the inequality
\[
\Big(\MeanReward_t(\OptimalArm_t)-\BaseLevel_t\Big)^2\cdot\ITypicalM_t
\leq
\Big(\Worth_t(\ChosenArm_t)-\BaseLevel_t\Big)^2\cdot\ITypicalW_t
\]
 holds almost surely since for OFUL (as we will see in \cref{sec:examples}) the worth function is $\Worth_t(\ChosenArm_t)=\UpperConf_t(\ChosenArm_t)$. However, the analysis of ROFUL shows that all we need is that the inequality holds in expectation and up to a constant $\OptProb$.
In \cref{sec:sg}, we will leverage the above intuition and introduce our sieved greedy (SG) algorithm that
selects actions more greedily than OFUL while maintaining OFUL's regret guarantees up to a constant. The core idea behind SG is to use data to stay as close as possible to the greedy policy while ensuring that the selected action $\ChosenArm_t$ satisfies the optimism-in-expectation condition.

Next, we show that the gain rate of ROFUL can be controlled by $\OptProb$, when the worth functions are reasonable and optimistic in expectation with parameter $\OptProb$.
\begin{thm}[Gain rate of ROFUL]
\label{thm:roful-gain-rate}
Assume that $\Worth_t(\cdot)$ is reasonable
and optimistic in expectation (with parameter $\OptProb$). Also assume that \cref{as:boundedness} holds with constant $\Deviation$; then we have
\begin{align*}
    \Expect*{
        \Big(\MeanReward_t(\OptimalArm_t)-\MeanReward_t(\ChosenArm_t)\Big)^2
    }
    \leq
    \frac2\OptProb\Expect*{
       \Big(\UpperConf_t(\ChosenArm_t)-\LowerConf_t(\ChosenArm_t)\Big)^2
    }+\frac{\Deviation}{2t^2}\,.
\end{align*}
\end{thm}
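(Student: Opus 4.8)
The plan is to bound the squared regret $\big(\MeanReward_t(\OptimalArm_t)-\MeanReward_t(\ChosenArm_t)\big)^2$ by passing through the baseline $\BaseLevel_t$ and the confidence widths, handling separately the high-probability event on which both $\ITypicalM_t=1$ and $\ITypicalW_t=1$ and its complement, which is controlled by Assumption~\ref{as:boundedness}. First I would split
\begin{align*}
    \big(\MeanReward_t(\OptimalArm_t)-\MeanReward_t(\ChosenArm_t)\big)^2
    =
    \big(\MeanReward_t(\OptimalArm_t)-\MeanReward_t(\ChosenArm_t)\big)^2\cdot\ITypicalM_t\ITypicalW_t
    +
    \big(\MeanReward_t(\OptimalArm_t)-\MeanReward_t(\ChosenArm_t)\big)^2\cdot\big(1-\ITypicalM_t\ITypicalW_t\big)\,,
\end{align*}
and deal with the two terms in turn.

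For the ``typical'' term, on the event $\{\ITypicalM_t\ITypicalW_t=1\}$ I would use three inequalities: (i) $\MeanReward_t(\ChosenArm_t)\ge\LowerConf_t(\ChosenArm_t)\ge\BaseLevel_t$ is false in general, so instead I would write $\MeanReward_t(\OptimalArm_t)-\MeanReward_t(\ChosenArm_t)\le \MeanReward_t(\OptimalArm_t)-\BaseLevel_t + \BaseLevel_t - \MeanReward_t(\ChosenArm_t)$, noting that $\BaseLevel_t-\MeanReward_t(\ChosenArm_t)\le \BaseLevel_t-\LowerConf_t(\ChosenArm_t)\le \UpperConf_t(\ChosenArm_t)-\LowerConf_t(\ChosenArm_t)$ since $\BaseLevel_t\le\UpperConf_t(\ChosenArm_t)$ (because $\ChosenArm_t$ maximizes the worth, which is reasonable, hence at least $\BaseLevel_t$... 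I need to verify $\BaseLevel_t\le\Worth_t(\ChosenArm_t)\le\UpperConf_t(\ChosenArm_t)$); and (ii) $\MeanReward_t(\OptimalArm_t)-\BaseLevel_t\le \UpperConf_t(\OptimalArm_t)-\BaseLevel_t$, but more usefully I should relate $\MeanReward_t(\OptimalArm_t)-\BaseLevel_t$ directly to the worth of the chosen action via optimism. The cleaner route: on the typical event, $\big(\MeanReward_t(\OptimalArm_t)-\MeanReward_t(\ChosenArm_t)\big)\le 2\max\{\MeanReward_t(\OptimalArm_t)-\BaseLevel_t,\ \Worth_t(\ChosenArm_t)-\BaseLevel_t\}$ plus a confidence-width term, so that after squaring and using $(x+y)^2\le 2x^2+2y^2$, then taking expectations, the $\Expect{(\MeanReward_t(\OptimalArm_t)-\BaseLevel_t)^2\ITypicalM_t}$ piece is absorbed by optimism in expectation (Definition~\ref{def:optimism-in-exp}) into $\tfrac1\OptProb\Expect{(\Worth_t(\ChosenArm_t)-\BaseLevel_t)^2\ITypicalW_t}$, and on the typical event $\Worth_t(\ChosenArm_t)-\BaseLevel_t\le\UpperConf_t(\ChosenArm_t)-\BaseLevel_t\le\UpperConf_t(\ChosenArm_t)-\LowerConf_t(\ChosenArm_t)$. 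Collecting constants should give the factor $2/\OptProb$ in front of $\Expect{(\UpperConf_t(\ChosenArm_t)-\LowerConf_t(\ChosenArm_t))^2}$.

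For the complementary term, note $1-\ITypicalM_t\ITypicalW_t$ is a Bernoulli-type indicator of a bad event whose probability is at most $2t^{-3}\le\rho$ for suitable $\rho\in[T^{-2},1]$ (union bound over the two reasonableness/confidence failures, for $t\le T$), and $\big(\MeanReward_t(\OptimalArm_t)-\MeanReward_t(\ChosenArm_t)\big)^2\le\big(\sup_{\Arm}\MeanReward_t(\Arm)-\inf_{\Arm}\MeanReward_t(\Arm)\big)^2$ pointwise; hence Assumption~\ref{as:boundedness} applied with $Z=1-\ITypicalM_t\ITypicalW_t$ (after possibly inflating to the right mass $\rho$) yields a bound of order $\Deviation\rho/4 \le \Deviation/(2t^2)$ once we track that $\rho\le 2t^{-3}$ suffices to beat $\tfrac1{2t^2}$ for $t\ge2$, with the $t=1$ case checked directly. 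The main obstacle I anticipate is the bookkeeping in the typical term: getting the precise constant $2/\OptProb$ rather than something larger requires being careful about how the triangle-inequality split interacts with the indicators $\ITypicalM_t,\ITypicalW_t$ (they sit on different events a priori, so one must work on the intersection and bound cross terms), and about the chain of inequalities $\BaseLevel_t\le\Worth_t(\ChosenArm_t)\le\UpperConf_t(\ChosenArm_t)$ on the reasonable event together with $\LowerConf_t(\ChosenArm_t)\le\MeanReward_t(\ChosenArm_t)$ on the confidence event. The second obstacle is matching the deviation constant exactly to $\Deviation/(2t^2)$, which is really just choosing $\rho$ correctly in Assumption~\ref{as:boundedness} and is routine.
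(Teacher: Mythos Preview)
Your proposal is essentially the same approach as the paper's proof: split on the event $\{\ITypicalM_t\ITypicalW_t=1\}$, handle the complement via \cref{as:boundedness} with $\rho\approx 2t^{-2}$, and on the typical event insert the baseline $\BaseLevel_t$, use $(x+y)^2\le 2x^2+2y^2$, and absorb the $(\MeanReward_t(\OptimalArm_t)-\BaseLevel_t)^2\ITypicalM_t$ piece by optimism in expectation. All the inequalities you flag (in particular $\BaseLevel_t\le\Worth_t(\ChosenArm_t)\le\UpperConf_t(\ChosenArm_t)$ on $\{\ITypicalW_t=1\}$, since $\Worth_t(\ChosenArm_t)=\max_{\Arm}\Worth_t(\Arm)\ge\max_{\Arm}\LowerConf_t(\Arm)=\BaseLevel_t$) are exactly the ones the paper uses.

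The only place you would end up with a looser constant is the last combination step. If you bound both $\Worth_t(\ChosenArm_t)-\BaseLevel_t$ and $\BaseLevel_t-\LowerConf_t(\ChosenArm_t)$ separately by the full width $\UpperConf_t(\ChosenArm_t)-\LowerConf_t(\ChosenArm_t)$, you get $\tfrac{2}{\OptProb}+2\le\tfrac{4}{\OptProb}$ in front of the width squared. The paper instead keeps the second term as $(\BaseLevel_t-\LowerConf_t(\ChosenArm_t))^2$, pulls the factor $\tfrac{1}{\OptProb}\ge 1$ in front of \emph{both} terms, bounds $\Worth_t(\ChosenArm_t)-\BaseLevel_t\le \UpperConf_t(\ChosenArm_t)-\BaseLevel_t$, and then uses the elementary recombination
\[
(\UpperConf_t(\ChosenArm_t)-\BaseLevel_t)^2+(\BaseLevel_t-\LowerConf_t(\ChosenArm_t))^2\le(\UpperConf_t(\ChosenArm_t)-\LowerConf_t(\ChosenArm_t))^2,
\]
valid because $\LowerConf_t(\ChosenArm_t)\le\BaseLevel_t\le\UpperConf_t(\ChosenArm_t)$ on $\{\ITypicalW_t=1\}$. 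That is what yields the exact constant $2/\OptProb$ you were worried about.
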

Before proving \cref{thm:roful-gain-rate}, we state
our main regret bound for ROFUL which is a corollary of \cref{thm:general-regret}, \cref{rem:prob-ind-bound}, and \cref{thm:roful-gain-rate}.
\begin{cor}[Regret of ROFUL]
\label{cor:roful-regret}
If one defines an uncertainty structure by
\begin{align*}
    \Uncertainty_t(\Arm)=\Big(\UpperConf_t(\Arm)-\LowerConf_t(\Arm)\Big)^2\,,
\end{align*}
\cref{thm:general-regret} implies the following gap-dependent regret bound, for any $\delta$ and $\GapProb*[\GapLevel]$ as in \cref{def:gapped-problem}:
\begin{align*}
    \Regret(T,\PolicyROFUL)
    \leq
    \frac{2\Complexity{}}{\GapLevel\OptProb}+\frac{\Deviation}{\GapLevel}+T\GapLevel\GapProb*[\GapLevel]\,,
\end{align*}
and (by \cref{rem:prob-ind-bound}) the following gap-independent regret bound:
\begin{align*}
    \Regret(T,\PolicyROFUL)
    \leq
   2\sqrt{\left(\frac{2\Complexity{}}{\OptProb}+\Deviation\right)T}\,.
\end{align*}
\end{cor}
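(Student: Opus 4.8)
The plan is to chain together the three ingredients already in hand — \cref{thm:roful-gain-rate}, \cref{thm:general-regret}, and the problem-independent reduction of \cref{rem:prob-ind-bound} — so that essentially no new argument is needed beyond matching up constants.

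First I would fix the uncertainty structure $\Uncertainty_t(\Arm)=\bigl(\UpperConf_t(\Arm)-\LowerConf_t(\Arm)\bigr)^2$ as in the statement. Under the hypotheses carried over from \cref{thm:roful-gain-rate} (the worth functions are reasonable and optimistic in expectation with parameter $\OptProb$, and \cref{as:boundedness} holds with constant $\Deviation$), that theorem gives, for every $t\in[T]$,
\[
    \Expect*{\bigl(\MeanReward_t(\OptimalArm_t)-\MeanReward_t(\ChosenArm_t)\bigr)^2}
    \le
    \frac{2}{\OptProb}\,\Expect*{\Uncertainty_t(\ChosenArm_t)}+\frac{\Deviation}{2t^2}.
\]
I would then observe that this is exactly the strengthened (gap-free) gain-rate inequality \eqref{eq:gain-rate-prob-ind}, with $\GainRate=\OptProb/2$ and with the deviation constant equal to the $\Deviation$ from \cref{as:boundedness}. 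In particular $\PolicyROFUL$ has gain rate $\GainRate*[\GapLevel]=\OptProb/2$ with respect to $\{\Uncertainty_t\}$ for every gap level $\GapLevel$, and $\Deviation*[\GapLevel]=\Deviation$ independently of $\GapLevel$.

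Next I would invoke \cref{thm:general-regret} with these values: substituting $\GainRate*[\GapLevel]=\OptProb/2$ and $\Deviation*[\GapLevel]=\Deviation$ into \eqref{eq:regret-general-gapped} yields
\[
    \Regret(T,\PolicyROFUL)\le\frac{\Complexity}{\GapLevel\cdot(\OptProb/2)}+\frac{\Deviation}{\GapLevel}+T\GapLevel\GapProb*[\GapLevel]
    =\frac{2\Complexity}{\GapLevel\OptProb}+\frac{\Deviation}{\GapLevel}+T\GapLevel\GapProb*[\GapLevel],
\]
which is the claimed gap-dependent bound. For the gap-independent bound, since the gain-rate constant does not depend on $\GapLevel$, I would appeal to the computation in \cref{rem:prob-ind-bound}, specifically \eqref{eq:prob-ind-regret-general}, which gives $\Regret(T,\PolicyROFUL)\le 2\sqrt{(\Complexity/\GainRate+\Deviation)T}=2\sqrt{(2\Complexity/\OptProb+\Deviation)T}$.

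There is no real obstacle here; the corollary is purely a bookkeeping consequence of the earlier results, and all the substance sits in \cref{thm:roful-gain-rate}. The one point that needs a moment's care is confirming that the constant $\Deviation$ in the conclusion of \cref{thm:roful-gain-rate} is the same $\Deviation$ that plays the role of the deviation term in the gain-rate definition \eqref{eq:def-gain-rate} (so that one may take $\Deviation*[\GapLevel]=\Deviation$), and that the inequality of \cref{thm:roful-gain-rate} genuinely has the stronger form \eqref{eq:gain-rate-prob-ind} rather than only the truncated form \eqref{eq:def-gain-rate} — which it does, since its left-hand side carries no indicator and therefore dominates the indicator-weighted version for every $\GapLevel$.
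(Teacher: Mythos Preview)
Your proposal is correct and matches the paper's own treatment: the paper states the corollary as an immediate consequence of \cref{thm:general-regret}, \cref{rem:prob-ind-bound}, and \cref{thm:roful-gain-rate}, without giving a separate proof, and your argument spells out exactly this chaining (including the observation that \cref{thm:roful-gain-rate} yields the stronger, $\GapLevel$-free inequality \eqref{eq:gain-rate-prob-ind} with $\GainRate=\OptProb/2$).
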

\begin{proof}[Proof of \cref{thm:roful-gain-rate}] Define indicator variable $\ITypical_t$ as $\ITypical_t:=\ITypicalW_t\cdot\ITypicalM_t$. Since these are indicator variables, we have $1-\ITypical_t\leq (1-\ITypicalW_t)+(1-\ITypicalM_t)$.
Therefore, by the definition of $\ITypicalW_t$ and of $\ITypicalM_t$, we obtain that,
\[
	\Expect*{1-\ITypical_t}
	\leq	\Expect*{1-\ITypicalW_t}+ \Expect*{1-\ITypicalM_t}\leq 2t^{-3}\leq 2t^{-2}\,.
\]
Using  \cref{as:boundedness} we obtain
\begin{align}
	\Expect*{
		\Big(\MeanReward_t(\OptimalArm_t)-\MeanReward_t(\ChosenArm_t)\Big)^2\cdot(1-\ITypical_t)
	}
	&\leq
	\frac{\Deviation}{2t^2}\,.\label{eq:pf-thm-roful-gain-rate-2}
\end{align}
On the other hand,
\begin{align}
    \Expect*{
        \Big(\MeanReward_t(\OptimalArm_t)-\MeanReward_t(\ChosenArm_t)\Big)^2\cdot\ITypical_t
    }
    &\leq
    \Expect*{
        \Big(\MeanReward_t(\OptimalArm_t)-\LowerConf_t(\ChosenArm)\Big)^2\cdot\ITypical_t
    }\nonumber\\
    &\leq
    2\,\Expect*{
        \Big(\MeanReward_t(\OptimalArm_t)-\BaseLevel_t\Big)^2\cdot\ITypicalM_t+\Big(\BaseLevel_t-\LowerConf_t(\ChosenArm_t)\Big)^2\cdot\ITypical_t
    }\nonumber\\
    &\leq
    \frac2\OptProb\Expect*{
        \Big(\Worth_t(\ChosenArm_t)-\BaseLevel_t\Big)^2\cdot\ITypicalW_t+\Big(\BaseLevel_t-\LowerConf_t(\ChosenArm_t)\Big)^2\cdot\ITypical_t
    }\nonumber\\
    &\leq
    \frac2\OptProb\Expect*{
        \Big(\UpperConf_t(\ChosenArm_t)-\BaseLevel_t\Big)^2\cdot\ITypicalW_t+\Big(\BaseLevel_t-\LowerConf_t(\ChosenArm_t)\Big)^2\cdot\ITypicalW_t
    }\nonumber\\
    &\leq
    \frac2\OptProb\Expect*{
        \Big(\UpperConf_t(\ChosenArm_t)-\LowerConf_t(\ChosenArm_t)\Big)^2\cdot\ITypicalW_t
    }\nonumber\\
    &\leq
    \frac2\OptProb\Expect*{
        \Big(\UpperConf_t(\ChosenArm_t)-\LowerConf_t(\ChosenArm_t)\Big)^2
    }\,.\label{eq:pf-thm-roful-gain-rate-1}
\end{align}
The result now follows by summing both sides of \cref{eq:pf-thm-roful-gain-rate-2} and \cref{eq:pf-thm-roful-gain-rate-1}.
\end{proof} 
\section{Examples of ROFUL and Sieved Greedy}
\label{sec:examples}

The goal of this section is to demonstrate tangible examples of the ROFUL algorithm that may have seemed rather abstract up to this point.
First, in \crefrange{sec:oful-worst-case}{sec:bayesian-ts-finite-arms} we show that OFUL and variations of TS are special cases of ROFUL, which helps us to recover known regret bounds for them via our machinery from \crefrange{sec:uncertainty}{sec:optimism}. Then, in \cref{sec:sg}, motivated by our notion of optimism in expectation and its role in the regret of ROFUL, we introduce a new algorithm (sieved greedy) that enjoys similar theoretical guarantees to those of OFUL, but tends to make more greedy decisions, and hence achieves better empirical performance.

\subsection{Worst-case analysis of OFUL}\label{subsec:OFUL-as-ROFUL}
\label{sec:oful-worst-case}
As our first example, we study the OFUL algorithm of \cite{abbasi2011improved}.
First, building on the notation from \cref{eq:sym-cov-matrix-def}, we define
\begin{align*}
    \Estimator_{t}
    :=
    \SymCovMatrix_t\left(\frac{1}{\sigma^2}\sum_{s=1}^{t}\ChosenArm_s\Reward(\ChosenArm_s)\right)\,.
\end{align*}
Using Theorem 1 of \cite{abbasi2011improved}, we realize that
\begin{align*}
    \Prob*{
        \Norm{\Param-\Estimator_{t}}_{\SymCovMatrix_t^{-1}}
        \geq
        \Radius
    }
    \leq
    \frac{1}{t^3}\,,
\end{align*}
where
\begin{align}
    \Radius
    :=
  \sqrt{d\log\left(1+\frac{T\ArmBound^2}{d\sigma^2}\right)+7\log T}+\frac{1}{\sqrt\lambda}(\ParamBound+\sqrt{7\log T})\,.
    \label{eq:oful-lb-radius}
\end{align}
Therefore,  we can apply the Cauchy--Schwartz inequality and conclude that, for all $A\in\ArmSet_t$,  the following functions satisfy the confidence bounds definition:
\begin{align}
    \LowerConf_t(\Arm)
    :=
    \Inner[\big]{\Estimator_t,\Arm}-\Radius\Norm{\Arm}_{\SymCovMatrix_t}
    ~~~~~~~~~\text{and}~~~~~~~~~
    \UpperConf_t(\Arm)
    :=
    \Inner[\big]{\Estimator_t,\Arm}+\Radius\Norm{\Arm}_{\SymCovMatrix_t}\,.\label{eq:CI-via-rho-sigma}
\end{align}
Moreover, OFUL can be written as an instance of ROFUL as follows:
\begin{align*}
    \Worth_t(\Arm):=\UpperConf_t(\Arm)\,.
\end{align*}
Reasonableness follows from the definition of this worth function and $\ITypicalW_t$ will be always equal to $1$. For optimism, note that, whenever $\ITypicalM_t=1$, we have
\begin{align*}
    \Worth_t(\ChosenArm_t)
    &=
    \UpperConf_t(\ChosenArm_t)\\
    &\geq
    \UpperConf_t(\OptimalArm_t)\\
    &\geq
    \MeanReward_t(\OptimalArm_t)\\
    &\geq
    \BaseLevel_t\,.
\end{align*}
We thus get
\begin{align*}
    \Expect*{\Big(\MeanReward_t(\OptimalArm_t)-\BaseLevel_t\Big)^2\cdot\ITypicalM_t}
    \leq
    \Expect*{\Big(\Worth_t(\ChosenArm_t)-\BaseLevel_t\Big)^2\cdot\ITypicalW_t}\,.
\end{align*}
This in turn implies that the optimism in expectation holds with $\OptProb=1$. Using \cref{cor:roful-regret} together with \cref{eq:unstr-lin-bandit-complexity-bound} leads to the following gap-dependent bound:
\begin{align*}
    \Regret(T,\PolicyOFUL)
    \leq
    \frac{16\sigma^2\Radius^2d}{\GapLevel}\log\left(1+\frac{T\ArmBound^2}{d\sigma^2}\right)+\frac{\Deviation}{\GapLevel}+T\GapLevel\GapProb*[\GapLevel]\,,
\end{align*}
and the following gap-independent bound:
\begin{align*}
    \Regret(T,\PolicyOFUL)
    \leq
    2\sqrt{\left[16 \sigma^2\Radius^2d\log\left(1+\frac{T\ArmBound^2}{d\sigma^2}\right)+\Deviation \right]T }\,.
\end{align*}
Note that, if we ignore logarithmic factors, this bound is $\Order{d\sqrt{T}}$ since $\Radius$ is $\Order{\sqrt{d}}$ and $\Deviation$ is constant.

\subsection{Bayesian analysis of TS}\label{subsec:bayesian-TS-as-ROFUL}
We obtain a Bayesian regret upper bound for TS similar to the one proved by \cite{russo2014learning}. Let $\Estimator_{t}$, $\Radius$, $\LowerConf_t$, and $\UpperConf_t$ be given as in \cref{sec:oful-worst-case}. Unlike in the previous section where $\Param$ was fixed, here we assume that $\Param$ is also drawn from a prior distribution. Define the worth function by
\begin{align*}
    \Worth_t(\Arm)
    :=
    \Inner[\big]{\TsSample_t,\Arm}\,,
\end{align*}
where $\TsSample_t$ is a sample drawn from the posterior distribution of $\Param$ at time $t$ that is used in TS. Therefore, $\TsSample_t$ and $\Param$ are exchangeable, given $\HistoryPlus_{t-1}$; this, together with the definition of TS, gives
\begin{align*}
    \Expect*{\Big(\MeanReward_t(\OptimalArm_t)-\BaseLevel_t\Big)^2\cdot\ITypicalM_t\Given\HistoryPlus_{t-1}}
    =
    \Expect*{\Big(\Worth_t(\ChosenArm_t)-\BaseLevel_t\Big)^2\cdot\ITypicalW_t\Given\HistoryPlus_{t-1}},
\end{align*}
almost surely. This implies optimism in expectation with $\OptProb=1$. For reasonableness, we can leverage the same argument and obtain
\begin{align*}
    \Prob*{\forall\Arm\in\ArmSet_t:\Worth_t(\Arm)\in[\LowerConf_t(\Arm),\UpperConf_t(\Arm)]
    }
    &=
    \Expect*{
        \Prob[\bigg]{\forall\Arm\in\ArmSet_t:\Worth_t(\Arm)\in[\LowerConf_t(\Arm),\UpperConf_t(\Arm)]
        \Given
        \HistoryPlus_{t-1}
    }}\\
    &=
    \Expect*{
        \Prob[\bigg]{\forall\Arm\in\ArmSet_t:\MeanReward_t(\Arm)\in[\LowerConf_t(\Arm),\UpperConf_t(\Arm)]
        \Given
        \HistoryPlus_{t-1}
    }}\\
    &=
    \Prob[\Big]{
        \forall\Arm\in\ArmSet_t:\MeanReward_t(\Arm)\in[\LowerConf_t(\Arm),\UpperConf_t(\Arm)]
    }\\
    &\leq
    \frac1{t^3}\,.
\end{align*}
Hence, we obtain the same gap-dependent bound of
\begin{align*}
	\Regret(T,\PolicyTS)
	\leq
	\frac{16\sigma^2\Radius^2d}{\GapLevel}\log\left(1+\frac{T\ArmBound^2}{d\sigma^2}\right)+\frac{\Deviation}{\GapLevel}+T\GapLevel\GapProb*[\GapLevel]\,,
\end{align*}
and the same gap-independent bound of
\begin{align*}
	\Regret(T,\PolicyTS)
	\leq
	2\sqrt{\left[16 \sigma^2\Radius^2d\log\left(1+\frac{T\ArmBound^2}{d\sigma^2}\right)+\Deviation \right]T }\,.
\end{align*}

\subsection{Worst-case analysis of TS}\label{subsec:frequentist-TS-as-ROFUL}

In this section, we study the worst-case (frequentist) regret of TS with inflated posterior variance. We recover the same bounds as the ones by \cite{agrawal2013thompson} and \cite{abeille2017linear}.  \cref{alg:inflated-ts} shows the pseudocode for this instance of TS. We also make the additional assumption that  $|\ArmSet_t|\le n$ for all $t$.
\begin{algorithm}[t]
\caption{Linear Thompson sampling with inflated posterior}
\label{alg:inflated-ts}
\begin{algorithmic}[1]
\REQUIRE Inflation rate $\Inflation$.
\STATE Initialize $\SymCovMatrix_1\gets\frac1\lambda\Eye_d$ and $\Estimator_1\gets0$
\FOR{$t=1,2,\cdots$}
\STATE Observe $\ArmSet_t$
\STATE Sample $\TsSample_t\sim\Normal{\Estimator_t,\:\Inflation^2\SymCovMatrix_t}$
\STATE $\ChosenArm_t\gets \Argmax_{\Arm\in\ArmSet_t}\Inner[\big]{\Arm,\Estimator_t}$
\STATE Observe reward $\Reward(\ChosenArm_t)$
\STATE $\SymCovMatrix_{t+1}^{-1}\gets\SymCovMatrix_t^{-1}+\frac1{\sigma^2}\ChosenArm_t\ChosenArm_t^\top$
\STATE $\Estimator_{t+1}\gets \SymCovMatrix_{t+1}\left(\SymCovMatrix_t^{-1}\Estimator_t+\frac1{\sigma^2}\ChosenArm_t\Reward(\ChosenArm_t) \right)$
\ENDFOR
\end{algorithmic}
\end{algorithm}

Due to the inflated variance, we need to redefine $\LowerConf_t(\cdot)$ and $\UpperConf_t(\cdot)$. Specifically, let
\begin{align*}
    \Radius':=\max\left\{\Radius, \Inflation\sqrt{\min\left\{2\Dim+12\log(T),6\log(2nT)\right\}} \right\}\,,
\end{align*}
and define
\begin{align*}
    \LowerConf_t(\Arm)
    :=
    \Inner[\big]{\Estimator_t,\Arm}-\Radius'\Norm{\Arm}_{\SymCovMatrix_t}
    ~~~~~~~~~\text{and}~~~~~~~~~
    \UpperConf_t(\Arm)
    :=
    \Inner[\big]{\Estimator_t,\Arm}+\Radius'\Norm{\Arm}_{\SymCovMatrix_t}\,.
\end{align*}
As $\Radius'\geq\Radius$, we infer that $\LowerConf_t(\cdot)$ and $\UpperConf_t(\cdot)$ satisfy the confidence bounds condition (\cref{def:conf-bound}). We note that this definition replaces the $\Radius^2$ term in $\Complexity$ with $\Radius'^2$.
Next, we prove that the worth function given by
\begin{align}
    \Worth_t(\Arm)=\Inner[\big]{\TsSample_t,\Arm}
    \label{eq:ts-worth}
\end{align}
is reasonable. This is achieved by the following lemma, which is proved in \cref{sec:auxi}.
\begin{lem}
\label{lem:optimism-for-ts}
For all $t\in[T]$, we have
\begin{align*}
    \Prob*{\forall\Arm\in\ArmSet_t:\Worth_t(\Arm)\in[\LowerConf_t(\Arm),\UpperConf_t(\Arm)]}
    &\geq
    1-\frac1{2t^3}.
\end{align*}
\end{lem}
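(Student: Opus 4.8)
The plan is to argue conditionally on $\History_{t-1}$, bound the failure probability conditionally by $\frac1{2t^3}$, and then take an expectation. Fix $t\in[T]$. Since $\Estimator_t$, $\SymCovMatrix_t$ and $\ArmSet_t$ are all $\History_{t-1}$-measurable while $\TsSample_t\sim\Normal{\Estimator_t,\:\Inflation^2\SymCovMatrix_t}$, conditionally on $\History_{t-1}$ we may write $\TsSample_t-\Estimator_t=\Inflation\,\SymCovMatrix_t^{1/2}g$ with $g\sim\Normal{0,\Eye_\Dim}$. Unwinding the definitions of $\LowerConf_t$, $\UpperConf_t$ and of $\Worth_t(\Arm)=\Inner{\TsSample_t,\Arm}$, reasonableness at time $t$ is exactly the event that $|\Inner{\TsSample_t-\Estimator_t,\Arm}|\le\Radius'\Norm{\Arm}_{\SymCovMatrix_t}$ for every $\Arm\in\ArmSet_t$. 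The definition of $\Radius'$ gives two alternative lower bounds on $\Radius'/\Inflation$, namely $\Radius'\ge\Inflation\sqrt{2\Dim+12\log T}$ or $\Radius'\ge\Inflation\sqrt{6\log(2nT)}$; at least one always holds, since even when $\Radius'=\Radius$ we still have $\Radius'/\Inflation\ge\sqrt{\min\{2\Dim+12\log T,\,6\log(2nT)\}}$. I would treat the two regimes separately.

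\textbf{Dimension-dependent regime.} Assuming $\Radius'\ge\Inflation\sqrt{2\Dim+12\log T}$, Cauchy--Schwarz gives $|\Inner{\TsSample_t-\Estimator_t,\Arm}|=\Inflation\,|\Inner{g,\SymCovMatrix_t^{1/2}\Arm}|\le\Inflation\,\NormTwo{g}\,\Norm{\Arm}_{\SymCovMatrix_t}$ \emph{simultaneously} over all $\Arm\in\ArmSet_t$, so it suffices that $\NormTwo{g}^2\le 2\Dim+12\log T$. Since $\NormTwo{g}^2\sim\chi^2_\Dim$, I would invoke the Laurent--Massart tail bound $\Prob{\NormTwo{g}^2\ge\Dim+2\sqrt{\Dim x}+2x}\le e^{-x}$ with $x=3\log T+\log 2$, and use $2\sqrt{\Dim x}\le\Dim+x$ to check $\Dim+2\sqrt{\Dim x}+2x\le 2\Dim+3x\le 2\Dim+12\log T$ (the last step needing only $T\ge 2$, the case $T=1$ being trivial). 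This yields conditional failure probability at most $e^{-x}=\frac1{2T^3}\le\frac1{2t^3}$.

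\textbf{Cardinality-dependent regime.} Assuming instead $\Radius'\ge\Inflation\sqrt{6\log(2nT)}$, I would not pass through the operator norm. For each fixed $\Arm$, conditionally on $\History_{t-1}$ the scalar $\Inner{\TsSample_t-\Estimator_t,\Arm}$ is $\Normal{0,\:\Inflation^2\Norm{\Arm}_{\SymCovMatrix_t}^2}$, so the one-dimensional Gaussian tail bound gives per-arm failure probability at most $2\exp(-(\Radius')^2/(2\Inflation^2))\le 2(2nT)^{-3}$; a union bound over the at most $n$ actions of $\ArmSet_t$ (using the standing assumption $|\ArmSet_t|\le n$ directly on all of $\ArmSet_t$) bounds the conditional failure probability by $2n(2nT)^{-3}=\frac1{4n^2T^3}\le\frac1{2t^3}$. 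In either regime the conditional failure probability is at most $\frac1{2t^3}$, so taking expectation over $\History_{t-1}$ proves the lemma.

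The conceptual content is light; the only real work is bookkeeping, and I expect that to be the main (minor) obstacle: choosing $x$ in the $\chi^2$ bound and reconciling the numerical constants so that both alternatives in $\Radius'$ deliver exactly a $\frac1{2t^3}$ tail, and noticing that the cardinality argument must union-bound over all of $\ArmSet_t$ rather than over its extremal points --- since $\Arm\mapsto\Norm{\Arm}_{\SymCovMatrix_t}$ is a seminorm, $\UpperConf_t$ is convex in the direction that breaks the extremal-point-to-convex-combination reduction, so one cannot restrict attention to extremal actions here.
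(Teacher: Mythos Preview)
Your proposal is correct and follows essentially the same route as the paper: the paper also splits into the two regimes determined by which term achieves the minimum in $\Radius'$, applies the Laurent--Massart $\chi^2$ tail (their Lemma~A.1 with $\gamma=4\log T$) via Cauchy--Schwarz in the dimension-dependent case, and uses a union bound with the one-dimensional Gaussian tail in the finite-arm case. The only differences are cosmetic choices of constants (you take $x=3\log T+\log 2$ to hit $\tfrac1{2T^3}$ exactly, they take $\gamma=4\log T$ to get $T^{-4}\le\tfrac1{2T^3}$), and your remarks about conditioning and extremal points are extra commentary not present in the paper.
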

In order to derive our regret bound, we also need to verify the optimism in expectation assumption. Whenever $\Inner[\big]{\Estimator_t-\Param,\OptimalArm_t}\geq    -\Radius\Norm[\big]{\OptimalArm_t}_{\SymCovMatrix_t}$, we have
\begin{align*}
    \Worth_t(\ChosenArm_t)-\MeanReward_t(\OptimalArm_t)
    &\geq
    \Worth_t(\OptimalArm_t)-\MeanReward_t(\OptimalArm_t)\\
    &=
    \Inner[\big]{\TsSample_t-\Param,\OptimalArm_t}\\
    &=
    \Inner[\big]{\TsSample_t-\Estimator_t,\OptimalArm_t}
    +
    \Inner[\big]{\Estimator_t-\Param,\OptimalArm_t}\\
    &\geq
    \Inner[\big]{\TsSample_t-\Estimator_t,\OptimalArm_t}
    -
    \Radius\Norm[\big]{\OptimalArm_t}_{\SymCovMatrix_t}\,.
\end{align*}
Since $\Inner[\big]{\TsSample_t-\Estimator_t,\OptimalArm_t}$ is distributed as $\Normal*{0,\Inflation^2\Norm[\big]{\OptimalArm_t}_{\SymCovMatrix_t}^2}$, we can deduce that
\begin{align*}
    \Prob*{
        \Worth_t(\ChosenArm_t)
        \geq
        \MeanReward_t(\OptimalArm_t)
    }
    &\geq
    \Phi\left(-\frac{\Radius}{\Inflation}\right)
    \Prob*{\Inner[\big]{\Estimator_t-\Param,\OptimalArm_t}\geq    -\Radius\Norm[\big]{\OptimalArm_t}_{\SymCovMatrix_t}}\\
    &\geq
    \frac12\Phi\left(-\frac{\Radius}{\Inflation}\right)\,.
\end{align*}
Finally, since $\ITypicalM_t=1$ and $\ITypicalW_t=1$ imply $\MeanReward_t(\OptimalArm_t)\geq\BaseLevel_t$ and $\Worth_t(\OptimalArm_t)\geq\BaseLevel_t$, respectively, it follows that
\begin{align*}
    \Prob*{
        (\Worth_t(\ChosenArm_t)-\BaseLevel_t)^2\cdot\ITypicalW_t
        \geq
        (\MeanReward_t(\OptimalArm_t)-\BaseLevel_t)^2\cdot\ITypicalM_t
        \Given
        \HistoryPlus_{t-1}
    }
    &\geq
    \frac12\Phi\left(-\frac{\Radius}{\Inflation}\right)-\frac1{T^3}\,.
\end{align*}
Therefore, for sufficiently large $T$, we have
\begin{align*}
    \Prob*{
        (\Worth_t(\ChosenArm_t)-\BaseLevel_t)^2\cdot\ITypicalW_t
        \geq
        (\MeanReward_t(\OptimalArm_t)-\BaseLevel_t)^2\cdot\ITypicalM_t
        \Given
        \HistoryPlus_{t-1}
    }
    &\geq
    \frac14\Phi\left(-\frac{\Radius}{\Inflation}\right)\,.
\end{align*}
Noting that $(\MeanReward_t(\OptimalArm_t)-\BaseLevel_t)^2\cdot\ITypicalM_t$ is deterministic conditional on $\HistoryPlus_{t-1}$, we have
\begin{align*}
    \Expect*{
        (\Worth_t(\ChosenArm_t)-\BaseLevel_t)^2\cdot\ITypicalW_t
    }
    &\geq
    \frac14\Phi\left(-\frac{\Radius}{\Inflation}\right)\cdot
    \Expect*{
        (\MeanReward_t(\OptimalArm_t)-\BaseLevel_t)^2\cdot\ITypicalM_t
    }\,.
\end{align*}
Therefore, optimism in expectation holds with $\OptProb=\Phi(-{\Radius}/{\Inflation})/2$.

Thus, similar to \cref{subsec:bayesian-TS-as-ROFUL}, \cref{cor:roful-regret} gives a gap-dependent bound of
\begin{align*}
	\Regret(T,\PolicyTS)
	\leq
	\frac{32\sigma^2\Radius'^2d}{\GapLevel\Phi\left(-\frac{\Radius}{\Inflation}\right)}\log\left(1+\frac{T\ArmBound^2}{d\sigma^2}\right)+\frac{\Deviation}{\GapLevel}+T\GapLevel\GapProb*[\GapLevel]\,,
\end{align*}
and a similar gap-independent bound of
\begin{align*}
	\Regret(T,\PolicyTS)
	\leq
	2\sqrt{\left[\frac{32 \sigma^2\Radius'^2d}{\Phi\left(-\frac{\Radius}{\Inflation}\right)}\log\left(1+\frac{T\ArmBound^2}{d\sigma^2}\right)+\Deviation \right]T }\,.
\end{align*}

\subsection{Bayesian analysis of TS with finitely many arms}\label{sec:bayesian-ts-finite-arms}

Following the same technique as in the previous example, we can prove a sharper regret bound for TS in the normal prior and normal noise setting. The main idea is to use smaller confidence bounds. More precisely, write
\begin{align*}
    \Radius'':=\sqrt{6\log(2nT)}
\end{align*}
and then define
\begin{align*}
    \LowerConf_t(\Arm)
    :=
    \Inner[\big]{\Estimator_t,\Arm}-\Radius''\Norm{\Arm}_{\SymCovMatrix_t}
    ~~~~~~~~~\text{and}~~~~~~~~~
    \UpperConf_t(\Arm)
    :=
    \Inner[\big]{\Estimator_t,\Arm}+\Radius''\Norm{\Arm}_{\SymCovMatrix_t}\,.
\end{align*}
Using the same techniques as in the proof of reasonableness in the previous example, we can show that these functions satisfy the confidence bounds condition and that the worth function defined by \cref{eq:ts-worth} is reasonable with respect to these confidence bounds. This yields an $\Order[\big]{\sqrt {dT\log(T)\log(nT)}}$ regret bound that is sharper than the well-known $\Order[\big]{d\log(T)\sqrt {T}}$ regret bound. The only comparable result that we are aware of is the $\Order[\big]{\sqrt{dT\Entropy(\OptimalArm)}}$ bound provided by \cite{russo2016information}. Although their bound does not require normality and is sharper than ours, it does not allow changing action sets as does our bound.


\subsection{Toward a better use of data: sieved greedy (SG)}\label{sec:sg}

\begin{algorithm}[ht]
\caption{Sieved-Greedy (SG)}
\label{alg:sg}
\begin{algorithmic}[1]
\REQUIRE $\Radius$, $\lambda$, $\sigma$, $\SieveRate$.
\STATE Initialize $\SymCovMatrix_1\gets\frac1\lambda\Eye_d$ and $\Estimator_1\gets0$
\FOR{$t=1,2,\cdots$}
\STATE Observe $\ArmSet_t$
\STATE Define $\LowerConf_t(\Arm)
    :=
    \Inner[\big]{\Estimator_t,\Arm}-\Radius\Norm{\Arm}_{\SymCovMatrix_t}$
    and
    $\UpperConf_t(\Arm)
    :=
    \Inner[\big]{\Estimator_t,\Arm}+\Radius\Norm{\Arm}_{\SymCovMatrix_t}.$
\STATE Construct $\ArmSet'_t:=\left\{\Arm\in\ArmSet_t:\UpperConf_t(\Arm)
    \ge
    \SieveRate
    \Big(\max_{\Arm'\in\ArmSet_t}
    \UpperConf_t(\Arm')-\BaseLevel_t\Big)
    +
    \BaseLevel_t\right\}$
\STATE $\ChosenArm_t\gets \Argmax_{\Arm\in\ArmSet_t'}\Inner[\big]{\Estimator_t,\Arm}$
\STATE Observe reward $\Reward(\ChosenArm_t)$
\STATE $\SymCovMatrix_{t+1}^{-1}\gets\SymCovMatrix_t^{-1}+\frac1{\sigma^2}\ChosenArm_t\ChosenArm_t^\top$
\STATE $\Estimator_{t+1}\gets \SymCovMatrix_{t+1}\left(\SymCovMatrix_t^{-1}\Estimator_t+\frac1{\sigma^2}\ChosenArm_t\Reward(\ChosenArm_t)\right)$
\ENDFOR
\end{algorithmic}
\end{algorithm}

In this section, we present a novel algorithm that enjoys the same regret bound as the one we proved for OFUL. This new policy, nonetheless, tends to make more greedy decisions. As we will see in \cref{sec:simulation}, this algorithm achieves a similar cumulative regret to that of the best policy in each scenario.
\begin{figure}[th]
    \centering
    \includegraphics[scale=0.7]{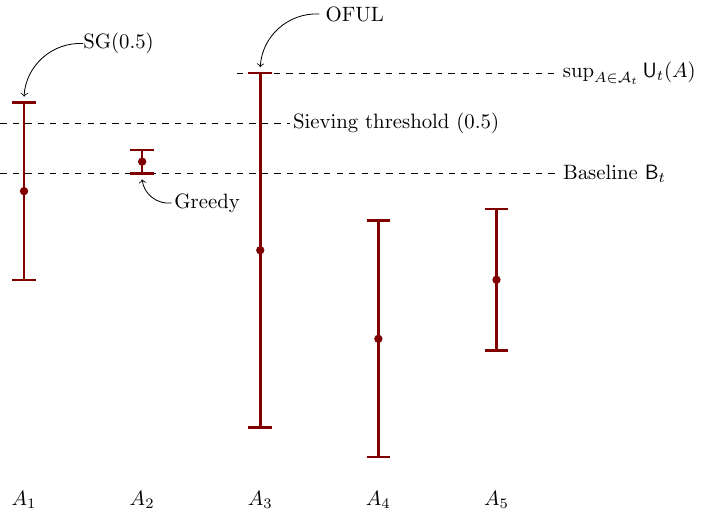}
    \caption{Illustration of how SG works compared to OFUL and greedy.}
    \label{fig:sg}
\end{figure}

This algorithm receives a \emph{sieving-rate} parameter $\SieveRate$ as input. Then, at time $t$, this algorithm first discards all the actions that lack sufficient uncertainty, i.e., that satisfy
\begin{align*}
    \UpperConf_t(\Arm)
    <
    \SieveRate
    \left(\max_{\Arm'\in\ArmSet_t}
    \UpperConf_t(\Arm')-\BaseLevel_t\right)
    +
    \BaseLevel_t\,.
\end{align*}
Denoting the set of remaining (sieved) actions by $\ArmSet'_t$, we note that the algorithm makes a greedy decision over $\ArmSet'_t$, i.e.,
\begin{align*}
    \ChosenArm_t\in\Argmax_{\Arm\in\ArmSet'_t}\Inner[\big]{\Estimator_t,\Arm}\,.
\end{align*}
%
Therefore, we call the algorithm sieved greedy (SG). When $\SieveRate=1$, this algorithm is identical to OFUL and $\SieveRate=0$ leads to the greedy algorithm. \cref{alg:sg} shows the pseudocode for SG and \cref{fig:sg} is an illustration of how SG works.

We show that this algorithm is also an instance of ROFUL. To do so, we introduce the following worth function:
\begin{align}\label{eq:sg-as-roful}
    \Worth_t(\Arm)
    :=
    \begin{cases}
    \UpperConf_t(\Arm)~~~~~~~ & \text{if $\Arm=\ChosenArm_t$,}\\
    \LowerConf_t(\Arm)        & \text{otherwise.}
    \end{cases}
\end{align}
We need to show that the ROFUL algorithm with this worth function chooses the same action as SG. Notice that
\begin{align*}
    \sup_{\Arm\in\ArmSet_t\setminus\{\ChosenArm_t\}}\LowerConf_t(\Arm)
    &\leq
    \BaseLevel_t
    \leq
    \UpperConf_t(\ChosenArm_t).
\end{align*}
Next, the reasonableness of the worth function is evident from the definition of $\Worth_t(\cdot)$. For the optimism, note that
\begin{align*}
    \Expect*{\big(\MeanReward_t(\OptimalArm_t)-\BaseLevel_t\big)^2\cdot\ITypicalM_t}
    &\leq
    \Expect*{\big(\sup_{\Arm\in\ArmSet_t}\UpperConf_t(\Arm)-\BaseLevel_t\big)^2}\\
    &\leq
    \frac{1}{\SieveRate^2}
    \Expect*{\big(\Worth_t(\ChosenArm_t)-\BaseLevel_t\big)^2}.
\end{align*}
Therefore, optimism in expectation holds with $\OptProb=\SieveRate^2$, which means that \cref{cor:roful-regret} gives the gap-dependent bound of
\begin{align*}
	\Regret(T,\PolicySG)
	\leq
	\frac{16\sigma^2\Radius^2d}{\GapLevel\SieveRate^2}\log\left(1+\frac{T\ArmBound^2}{d\sigma^2}\right)+\frac{\Deviation}{\GapLevel}+T\GapLevel\GapProb*[\GapLevel]\,,
\end{align*}
and the gap-independent bound of
\begin{align*}
	\Regret(T,\PolicySG)
	\leq
	2\sqrt{\left[\frac{16 \sigma^2\Radius^2d}{\SieveRate^2}\log\left(1+\frac{T\ArmBound^2}{d\sigma^2}\right)+\Deviation \right]T }\,.
\end{align*} 
\begin{rem}[Sieved version of general ROFUL]
Here we showed that SG is an instance of ROFUL. However, as shown in \cref{eq:sg-as-roful},  this reduction is general. Specifically,  $\ChosenArm_t$ need not to be selected greedily. In fact, any action that is selected from the set of sieved actions can be replaced with $\ChosenArm_t$, and the above regret analysis of SG stays valid. This means one can apply the sieving idea to any instance of ROFUL, including TS and OFUL. We expect SG to outperform such ``sieved TS" or ``sieved OFUL", at least empirically, because it makes more greedy decisions. But, there could be other circumstances, under which, sieved TS or sieved OFUL may be more preferred. For example, a decision-maker may prefer TS as it is a randomized policy, and in such a scenario, she can use sieved TS, with the same theoretical guarantees as SG, while maintaining a TS-based policy.
\end{rem}

\section{Numerical Simulations}
\label{sec:simulation}
In this section, we compare the  performance of OFUL, TS, greedy, and SG (with sieving rates $0,2$, $0.5$, and $0.8$) in two scenarios. In each scenario, the unknown parameter vector $\Param$ is first sampled from $\Normal{0,\Eye_d}$, where $d=120$. Then, in round $t$, a set of $n=10$ actions is generated. More precisely:

\paragraph{Scenario I.} A random vector $V_t$ is picked uniformly at random on the sphere of radius 5 in $\IR^{12}$. Then, the action $\Arm_{t,i}\in\IR^{120}$ for $i=1,2,\ldots,10$ is constructed by copying $V_t$ into the $i$-th block of size 12. Although $d=120$,  this scenario is equivalent to a $10$-armed $12$-dimensional contextual bandit problem with a shared feature vector $V_t$, embedded in the linear bandit framework, as explained by \cite{abbasi-yadkori2012online}.

\paragraph{Scenario II.} Motivated by the more general linear bandit problem,  each action is chosen uniformly at random on the sphere of radius 5 in $\IR^{120}$.

Each policy $\Policy$ chooses an action $\ChosenArm_t^\Policy\in\Arm_t$ and receives the reward $\Reward(\ChosenArm_t^\Policy)=\Inner{\Param,\ChosenArm_t^\Policy}+\eps_t$, where $\eps_t$ is a sequence of i.i.d. standard normal random variables. We run each experiment for $T=10,000$ rounds and repeat this procedure 50 times. The average regret of each policy (and error bars of width $2\times\text{SD}$ in each direction) is shown in \cref{fig:regret}. As is clear from the plots, in Scenario I, TS is the best policy and SG has a very similar performance, while greedy performs very poorly. But in Scenario II, greedy and SG achieve a substantially better performance compared to OFUL and TS. We also see that the performance of SG is generally less dependent on the sieving rate $\alpha$. Specifically, in Scenario II, all versions have the same performance as greedy. In Scenario I, while all versions outperform OFUL and greedy, $\alpha=0.5$ slightly outperforms the other two variants and nearly ties with TS.

These results underscore that SG inherits beneficial properties of both greedy and OFUL. It performs similar to greedy when greedy works well, but does not prematurely drop potentially optimal arms, which causes greedy to perform very poorly sometimes.

\begin{figure}[h]
	\begin{center}
		\begin{subfigure}{0.49\textwidth}
			\includegraphics[width=\textwidth]{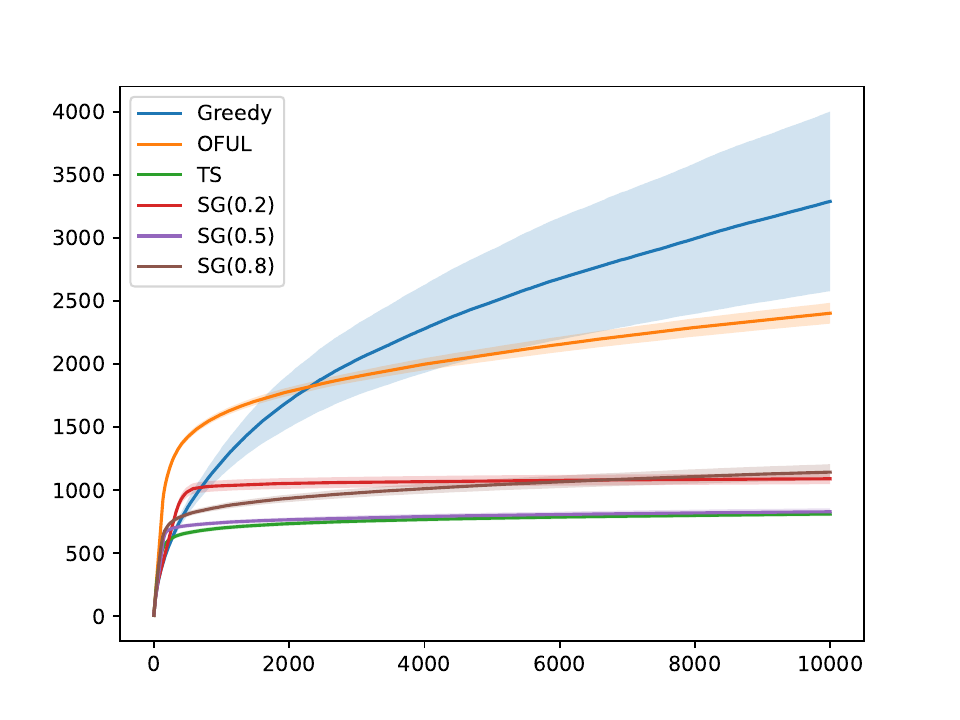}
			\caption{Scenario I}
		\end{subfigure}
		\hfill
		\begin{subfigure}{0.49\textwidth}
			\includegraphics[width=\textwidth]{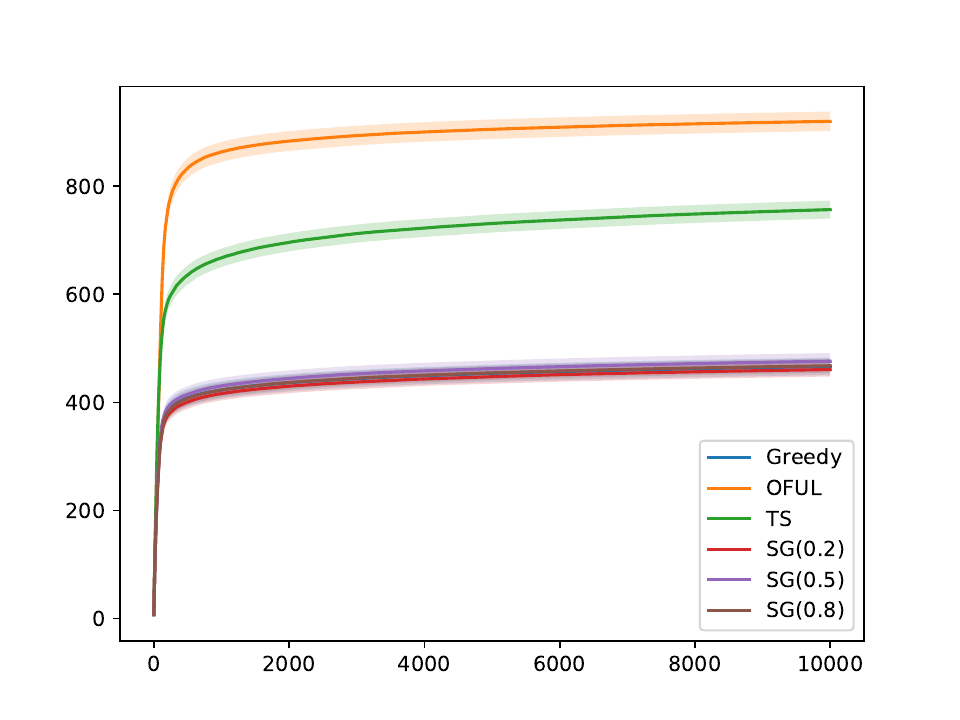}
			\caption{Scenario II}
		\end{subfigure}
\caption{Comparison of the cumulative regret incurred by SG with sieving rates 0.2, 0.5, and 0.8 to the cumulative regret of greedy, TS, and OFUL.}\label{fig:regret}
\end{center}
\end{figure}
\section{Improved Bounds for Two Important Subproblems}
\label{sec:margin}

We can strengthen our regret bounds for ROFUL for two important special cases of the stochastic linear bandit problem. Specifically, in \cref{sec:grouped-problems}, motivated by the $k$-armed contextual bandit problem, we show how our proof technique allows improving all regret bounds of \cref{sec:examples} by a factor $\sqrt{k}$.
Then, in \cref{sec:poly-log-results}, making similar (generalized gap and margin) assumptions to those by \cite{goldenshluger2013linear} and \cite{bastani2020online}, we obtain polylogarithmic regret bounds for ROFUL and the obtain the first such results for OFUL and TS.

\subsection{Grouped linear bandit}\label{sec:grouped-problems}

Here we focus on improving our previous regret bounds for a family of subproblems. Although these improvements are mainly motivated by the special case of the $k$-armed contextual bandit, we formulate a slightly more general case of the stochastic linear bandit which we will refer to as the \emph{grouped linear bandit.}
\begin{defn}[Grouped linear bandit]\label{def:glb}
Let $k$ and $d$ be two integers and let $(\GroupSpace_j)_{j=1}^k$ be a sequence of $d$-dimensional subspaces of $\IR^{kd}$ such that each vector $v\in\IR^{kd}$ can be uniquely decomposed to $v=\sum_{j=1}^kz_j$, where $z_j\in\GroupSpace_j$, i.e.,
$\IR^{kd}=\GroupSpace_1\Orthsum\GroupSpace_2\Orthsum\cdots\Orthsum\GroupSpace_k$. Then, a \emph{grouped linear bandit} (GLB) problem simply refers to a linear bandit problem in which $\ArmDomain\subseteq\bigcup_{j=1}^k\GroupSpace_j$.
\end{defn}
As mentioned above, the GLB formulation is meant to capture the specific structure in the contextual setting. In fact, a $k$-armed $d$-dimensional contextual bandit problem can be modeled as a $kd$-dimensional linear bandit one, as discussed by \cite{abbasi-yadkori2012online}. However, the GLB problem also includes the original linear bandit problem if we assume that $k=1$.
Notice that in the GLB problem we also have $\Param\in\IR^{kd}$, which in turn implies that the number of parameters is $kd$ (rather than $d$). Therefore, our previous problem-independent regret bounds from \cref{sec:examples} would be $\Order{kd\sqrt{T}}$ for this $kd$-dimensional problem. As we will see shortly, this bound can be tightened by a factor of $\sqrt{k}$ to $\Order{d\log(T)\sqrt{kT}}$.

The key observation is that the radius of the confidence set can be shrunk to
\begin{align}
    \GroupedRadius
    :=
    \sqrt{d\log\left(1+\frac{T\ArmBound^2}{d\sigma^2}\right)+7\log (kT)}+\frac{1}{\sqrt\lambda}\left(\ParamBound+\sqrt{7\log (kT)}\right)\,.
    \label{eq:oful-glb-radius}
\end{align}
Note that $\Radius$ as defined in \cref{eq:oful-lb-radius} for this problem is given by
\begin{align*}
    \Radius
    :=
   \sqrt{kd\log\left(1+\frac{T\ArmBound^2}{kd\sigma^2}\right)+7\log T}+\frac{1}{\sqrt\lambda}(\ParamBound+\sqrt{7\log T})\,.\,,
\end{align*}
which is worse than $\Radius$  as defined in \cref{eq:oful-glb-radius} by an asymptotic factor of $\sqrt{k}$ as $T$ grows large. Specifically, we can show that the following functions satisfy the confidence bounds definition:
\begin{align*}
	\LowerConf_t(\Arm)
	:=
	\Inner[\big]{\Estimator_t,\Arm}-\GroupedRadius\Norm{\Arm}_{\SymCovMatrix_t}
	~~~~~~~~~\text{and}~~~~~~~~~
	\UpperConf_t(\Arm)
	:=
	\Inner[\big]{\Estimator_t,\Arm}+\GroupedRadius\Norm{\Arm}_{\SymCovMatrix_t}\,.
\end{align*}
This can be shown by noting that when $\Arm\in \GroupSpace_j$, then $\Inner{\Estimator_t-\Param,\Arm}$ can be bounded by applying Theorem 1 of \cite{abbasi2011improved} in a $d$-dimensional rather than $kd$-dimensional setting. Combining this with the union bound, we obtain
\begin{align*}
        \Prob[\Big]{
    	\exists \Arm\in\ArmSet_t:
    	\MeanReward_t(A)\notin[\LowerConf_t(\Arm),\UpperConf_t(\Arm)]
    	}&\le
    \sum_{j=1}^k\Prob[\Big]{
    	\text{For some }A\in\GroupSpace_j,~
    	  \MeanReward_t(A)\notin[\LowerConf_t(\Arm),\UpperConf_t(\Arm)]
    } \\
   & \le
   \sum_{j=1}^k\Prob*{\text{For some }A\in\GroupSpace_j,~
       \frac{|\Inner{\Estimator_t-\Param,\Arm}|}{\Norm{\Arm}_{\SymCovMatrix_{t}}}
        \geq
        \GroupedRadius
    }\\
   & \leq
   \frac{1}{T^3}\,.
\end{align*}
By the same argument as in \cref{sec:oful-worst-case}, we get
\begin{align*}
    \Regret(T,\PolicyOFUL)
    \leq
    \frac{16\sigma^2\GroupedRadius^2kd}{\GapLevel}\log\left(1+\frac{T\ArmBound^2}{kd\sigma^2}\right)+\frac{\Deviation}{\GapLevel}+T\GapLevel\GapProb*[\GapLevel]\,,
\end{align*}
and by tuning $\delta$ as before we get
\begin{align*}
    \Regret(T,\PolicyOFUL)
    \leq
    2\sqrt{\left[16 \sigma^2\GroupedRadius^2kd\log\left(1+\frac{T\ArmBound^2}{kd\sigma^2}\right)+\Deviation \right]T }\,.
\end{align*}
Moreover, in the Bayesian setting, one can use these confidence bounds to prove a similar Bayesian regret bound for TS with the proper update rule. In the frequentist setting, on the other hand, this idea can be used to show that
\begin{align*}
    \Regret(T,\PolicyTS)
    \leq
    \frac{32\sigma^2\GroupedRadius'^2kd}{\GapLevel\Phi\left(-\frac{\GroupedRadius}{\Inflation}\right)}\log\left(1+\frac{T\ArmBound^2}{kd\sigma^2}\right)+\frac{\Deviation}{\GapLevel}+T\GapLevel\GapProb*[\GapLevel]\,,
\end{align*}
and by tuning $\delta$ as before,
\begin{align*}
    \Regret(T,\PolicyTS)
    \leq
    2\sqrt{\left[\frac{32 \sigma^2\GroupedRadius'^2kd}{\Phi\left(-\frac{\GroupedRadius}{\Inflation}\right)}\log\left(1+\frac{T\ArmBound^2}{kd\sigma^2}\right)+\Deviation \right]T }
\end{align*}
where
\begin{align*}
    \GroupedRadius':=
    \max\left\{\GroupedRadius, \Inflation\sqrt{\min\left\{2k\Dim+12\log(T),6\log(2nT)\right\}} \right\}\,.
\end{align*}
This shows that the posterior variance inflation can be reduced by a factor of $\Order{\sqrt k}$ in TS as $T$ grows large.

\subsection{Polylogarithmic regret bounds}\label{sec:poly-log-results}

In this subsection we provide regret bounds for ROFUL when confidence sets are defined such that they grow with $T$ polylogarithmically, under additional assumptions.
 Our assumptions are similar to those made by \cite{goldenshluger2013linear} and \cite{bastani2020online}.
Throughout this section we consider special classes of ROFUL where confidence intervals (\cref{def:conf-bound}) are defined as in \cref{eq:CI-via-rho-sigma} with varying definitions of $\rho$. This special class includes all examples of \cref{sec:examples} such as OFUL, TS, and SG algorithms.

To state the first assumption, recall the gap parameters $\Gap_t$ and $\GapLevel$ from \S \ref{sec:setting}.
\begin{asmp}[Margin condition]
\label{as:margin-condition}
There exists constants $c_0,t_0>0$ such that
\begin{align}
    \Prob*{\Gap_t\leq z}\leq c_0z
    \label{eq:margin-condition}
\end{align}
for all $0\leq z\leq \GapLevel$ and $t\in[T]$ with $t\geq t_0$.
\end{asmp}
Before stating the next condition, we need to define a notion of \emph{near-optimal space} for the family of GLB problems.
\begin{defn}[Near-optimal space]
	Consider a GLB problem as defined in \cref{def:glb}. Let $l\leq k$ be the smallest number such that there exists $\Iset\subseteq[k]$ with $|\Iset|=l$ and
	\begin{align*}
				\Prob*{
					\ArmSet_t^\GapLevel\subseteq \Orthsum_{j\in\Iset}\GroupSpace_j
				}=1
		~~~~~~\text{for all $t\in[T]$}\,.
	\end{align*}
We define near-optimal space $\NearOptimal$ as $\NearOptimal:=\Orthsum_{j\in\Iset}\GroupSpace_j$ and, with a slight abuse of notation, we also treat $\NearOptimal$ as the projection of $\IR^{kd}$ onto the subspace $\NearOptimal$.
\end{defn}
\begin{rem}
	The main purpose of this notion is to handle suboptimal arms in the special case of a $k$-armed contextual bandit. One might harmlessly assume that $\NearOptimal=\IR^{kd}$, or equivalently, assuming it is the identity function if viewed as an operator, and follow the rest of this section.
\end{rem}
The next assumption demands the selected actions to be diverse in the near-optimal space. 
Specifically, recall the inverse covariance matrix  $\SymCovMatrix_{t}$ of the actions chosen by a policy from \cref{eq:sym-cov-matrix-def}.
\begin{asmp}[Linear expansion]
\label{as:linear-expansion}
We say that linear expansion holds for a policy if
\begin{align*}
    \Prob*{
        \NormOp{\NearOptimal^\top\SymCovMatrix_t\NearOptimal}
        \geq
        \frac{c_2}{t}
    }
    \leq
    \frac{c_1}{2t^2}
\end{align*}
for some constants $c_1,c_2>0$ and all $t\in[T]$ with $t\geq t_0$. We denote the indicator variable for the event $\NormOp{\NearOptimal^\top\SymCovMatrix_t\NearOptimal}
<
c_2/t$ by $\LinExpansion_t$.
\end{asmp}
We will show in \cref{lem:ROFUL-satisfies-linear-expansions} that ROFUL satisfies the linear expansion assumption, under a variant of the optimism-in-expectation assumption from \cref{sec:optimism} as well as a certain diversity assumption.
This fact, combined with the following lemma, leads to our main result of this section which is presented as \cref{cor:final-poly-log-result-on-roful}. The next lemma operates on the same setting as \cref{thm:general-regret} with additional assumptions on the reasonableness of the worth functions, the margin condition, and the linear expansions.
\begin{lem}
\label{thm:roful-polylog-regret-bound}
Consider an uncertainty structure $\{\Uncertainty_t\}_{t\ge1}$ with uncertainty complexity $\Complexity$, gap level $\delta$, associated parameter $\GapProb*[\GapLevel]$, as well as gain-rate parameters $\GainRate*[\delta]$ and $\Deviation*[\GapLevel]$. Also, assume that worth functions of \cref{alg:roful} (policy $\PolicyROFUL$) are reasonable (\cref{def:reasonableness}), and that the margin condition (\cref{as:margin-condition}) and the linear expansion condition (\cref{,as:linear-expansion}) hold. Then, the cumulative regret of $\PolicyROFUL$ satisfies the following inequality:
\begin{align*}
    \Regret(T,\PolicyROFUL)
    \leq
		\frac{\Complexity{}}{\GapLevel\GainRate*[\GapLevel]}
+
\frac{\Deviation*[\GapLevel]}{\GapLevel}
+
\GapLevel(t_0\GapProb*[\GapLevel]+c_1+1)
+
16\ArmBound^2\Radius^2c_2c_0\log(T)\,.
\end{align*}\end{lem}
The proof of \cref{thm:roful-polylog-regret-bound} is given in \cref{subsec:pf-lemma-roful-polylog-regret-bound}.

\paragraph{Linear expansion and ROFUL.} In what follows, we will show that under a certain \emph{diversity condition} (\cref{as:diversity-condition}) and a generalization of the optimism assumption in \cref{sec:optimism}, the ROFUL algorithm satisfies linear expansion.
\begin{asmp}[Diversity condition]
\label{as:diversity-condition}
We say that a GLB problem satisfies the diversity condition with parameter $\DivParam$ if $\ArmSet_t$ is independent of $\SigmaAlgebra(\ArmSet_1,\ChosenArm_1,\Reward(\ChosenArm_1),\ldots,\ArmSet_{t-1},\ChosenArm_{t-1},\Reward(\ChosenArm_{t-1}))$ and
\begin{align}
    \lambda_{\min{}}\left(\Expect*{\DivMatrix_t}\right)\geq\DivParam
    ~~~~\text{and}~~~~
    \NormOp*{\Expect*{\left(\DivMatrix_t-\Expect*{\DivMatrix_t}\right)^2}}\leq\DivCovOp^2
    ~~~~\text{for all $t\in[T]$,}
    \label{eq:diversity-condition}
\end{align}
where $\DivMatrix_t:=\NearOptimal^\top\OptimalArm_t{\OptimalArm_t}^\top\NearOptimal\cdot\IGap_t=\OptimalArm_t{\OptimalArm_t}^\top\cdot\IGap_t$.
\end{asmp}
\cref{as:diversity-condition} is similar to Assumption A3 of \cite{goldenshluger2013linear} and Assumption 4 of \cite{bastani2020online}.
\begin{defn}[Optimism in probability]
\label{def:prob-optimism}
We say that the worth function $\Worth_t(\cdot)$ is \emph{optimistic in probability} if for some $\OptCoef$ and $\OptProb$ in $(0,1]$ we have
\begin{align}
    \Prob*{
    \Big(\MeanReward_t(\OptimalArm_t)-\BaseLevel_t\Big)\cdot\ITypicalM_t
    \leq
    \frac1\OptCoef\Big(\Worth_t(\ChosenArm_t)-\BaseLevel_t\Big)\cdot\ITypicalW_t
    \Given
    \HistoryPlus_{t-1}}
    \geq
    \frac{\OptProb}{\OptCoef^2},
    \label{eq:prob-optimism-def}
\end{align}
almost surely.
\end{defn}
\begin{rem}
A slightly stronger version of \cref{eq:prob-optimism-def} is that
\begin{align}
    \Prob*{
    \Big(\MeanReward_t(\OptimalArm_t)-\BaseLevel_t\Big)\cdot\ITypicalM_t
    \leq
    \frac1\OptCoef\Big(\Worth_t(\ChosenArm_t)-\BaseLevel_t\Big)\cdot\ITypicalW_t
    \Given
    \Param,\HistoryPlus_{t-1}}
    \geq
    \frac{\OptProb}{\OptCoef^2},
    \label{eq:strong-prob-optimism-def}
\end{align}
almost surely. It is worth noting that this stronger condition also implies optimism in expectation (\cref{def:optimism-in-exp}). First, note that $\OptCoef\big(\MeanReward_t(\OptimalArm_t)-\BaseLevel_t\big)\cdot\ITypicalM_t$ is a deterministic function of $(\Param,\HistoryPlus_{t-1})$. Therefore, we have
\begin{align*}
    \frac{\OptProb}{\OptCoef^2}\cdot\OptCoef^2\Big(\MeanReward_t(\OptimalArm_t)-\BaseLevel_t\Big)^2\cdot\ITypicalM_t
    &\leq
    \Expect*{\Big(\Worth_t(\ChosenArm_t)-\BaseLevel_t\Big)^2\cdot\ITypicalW_t\Given\Param,\HistoryPlus_{t-1}}.
\end{align*}
This proves optimism in expectation since,
\begin{align*}
    \Expect*{\big(\MeanReward_t(\OptimalArm_t)-\BaseLevel_t\big)^2\cdot\ITypicalM_t}
    &=
    \frac1\OptProb\Expect*{\OptProb\big(\MeanReward_t(\OptimalArm_t)-\BaseLevel_t\big)^2\cdot\ITypicalM_t}\\
    &\leq
    \frac1\OptProb\Expect*{\Expect*{\big(\Worth_t(\ChosenArm_t)-\BaseLevel_t\big)^2\cdot\ITypicalW_t\Given\Param,\HistoryPlus_{t-1}}}\\
    &=
    \frac1\OptProb\Expect*{\big(\Worth_t(\ChosenArm_t)-\BaseLevel_t\big)^2\cdot\ITypicalW_t}\,.
\end{align*}
It is worthwhile to mention that in the worst-case analysis of an algorithm, $\Param$ is a deterministic constant and, therefore, \cref{eq:prob-optimism-def} and \cref{eq:strong-prob-optimism-def} are equivalent. Nevertheless, the stronger condition \cref{eq:strong-prob-optimism-def} need not hold when $\Param$ is drawn from a prior distribution. An example of this situation is the Bayesian analysis of TS in which \cref{eq:exp-optimism-def} and \cref{eq:prob-optimism-def} hold simultaneously, although \cref{eq:strong-prob-optimism-def} fails to hold.
\end{rem}
Now, we are ready to state our result that ROFUL satisfies the linear expansion assumption if its worth functions are optimistic in probability.
\begin{lem}[ROFUL satisfies linear expansion]\label{lem:ROFUL-satisfies-linear-expansions}
If the diversity condition (\cref{as:diversity-condition}) holds and the worth functions of ROFUL are optimistic in probability (\cref{def:prob-optimism}), then
ROFUL satisfies the linear expansion condition (\cref{as:linear-expansion}) with
\[
    c_1
    :=
    6~,~~~
    c_2
    :=
    \frac{1}{4\OptProb\DivParam}~,~~~
    t_0
    :=
    \max\Big\{k,d,t'_0,
    3t''_0\log(t''_0)
    \Big\},
\]
where
\[
    t'_0
    :=
    \frac{16\ArmBound^2{\Radius}^2kd}{\OptProb\DivParam\OptCoef^2\GapLevel^2}\cdot\log\left(\lambda+\frac{T\ArmBound^2}{d}\right)~,~~~
    t''_0
    =
    \frac{32\DivCovOp^2+16\DivParam\ArmBound^2/3}{\OptProb\DivParam^2}\,.
\]
\end{lem}
The proof of \cref{lem:ROFUL-satisfies-linear-expansions} is given in \cref{subsec:pf-lemma-ROFUL-satisfies-linear-expansions}.

Next, we state the main result of this section, which directly follows from \cref{thm:roful-polylog-regret-bound} and \cref{lem:ROFUL-satisfies-linear-expansions}.
\begin{cor}\label{cor:final-poly-log-result-on-roful}
	Consider a GLB problem that satisfies the margin condition  (\cref{as:margin-condition}) and the diversity condition (\cref{as:diversity-condition}). Also, assume that the worth functions of \cref{alg:roful} (denoted by policy $\PolicyROFUL$) are reasonable  (\cref{def:reasonableness}) and optimistic in probability (\cref{def:prob-optimism}). Then the cumulative regret of $\PolicyROFUL$ satisfies the following inequality:
\begin{align*}
	\Regret(T,\PolicyROFUL)
	\leq
	\frac{\Complexity{}}{\GapLevel\GainRate*[\GapLevel]}
	+
	\Deviation*[\GapLevel]
	+
	\GapLevel(t_0\GapProb*[\GapLevel]+c_2+1)
	+
	16\ArmBound^2\Radius^2c_2\log(T)\,,
\end{align*}
where constants $c_1$, $c_2$, and $t_0$ are defined as in \cref{lem:ROFUL-satisfies-linear-expansions}.
\end{cor}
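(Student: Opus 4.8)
The plan is to derive \cref{cor:final-poly-log-result-on-roful} by composing two results already established: \cref{thm:ROFUL-satisfies-linear-expansions}, which upgrades the corollary's hypotheses into the linear-expansion assumption, and \cref{thm:roful-polylog-regret-bound}, which turns \cref{as:margin-condition} together with \cref{as:linear-expansion} into the displayed regret bound. So the proof amounts to checking that all the hypotheses of \cref{thm:roful-polylog-regret-bound} are met, which I would do in three short steps.

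First I would discharge \cref{as:linear-expansion}. The corollary assumes the GLB problem obeys the diversity condition (\cref{as:diversity-condition}) and that the worth functions of \cref{alg:roful} are optimistic in probability (\cref{def:prob-optimism}); these are exactly the hypotheses of \cref{thm:ROFUL-satisfies-linear-expansions}, so that lemma supplies \cref{as:linear-expansion} for $\PolicyROFUL$ with $c_1 = 6$, $c_2 = 1/(4\OptProb\DivParam)$, and the threshold $t_0$ given there, where I would additionally enlarge $t_0$ if necessary so that it also dominates the threshold in \cref{as:margin-condition}, ensuring both assumptions are in force for every $t \ge t_0$. Next I would pin down the quantities $\Complexity$, $\GainRate*[\GapLevel]$, and $\Deviation*[\GapLevel]$ appearing in \cref{thm:roful-polylog-regret-bound}: adopt the uncertainty structure $\Uncertainty_t(\Arm) = (\UpperConf_t(\Arm) - \LowerConf_t(\Arm))^2$ from \cref{cor:roful-regret}, so that $\Complexity = \sup_{\Policy}\Uncertainty(\Policy)$ for this structure, and invoke \cref{thm:roful-gain-rate} to obtain $\GainRate*[\GapLevel] = \OptProb/2$ and $\Deviation*[\GapLevel] = \Deviation$. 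The one point to verify is that \cref{thm:roful-gain-rate} applies, i.e.\ that the worth functions are optimistic \emph{in expectation}; this follows from optimism in probability via the remark after \cref{def:prob-optimism} --- in the worst-case setting \cref{eq:prob-optimism-def} and \cref{eq:strong-prob-optimism-def} coincide because $\Param$ is deterministic, and in the Bayesian setting optimism in expectation is available directly, as was already checked for TS.

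With \cref{as:margin-condition} assumed and \cref{as:linear-expansion} established, \cref{thm:roful-polylog-regret-bound} then applies verbatim and yields the claimed inequality, the constants $c_1, c_2, t_0$ being inherited from \cref{thm:ROFUL-satisfies-linear-expansions}; the only $T$-dependent term is $16\ArmBound^2\Radius^2 c_2 \log T$, which is poly-logarithmic, which is the whole point of the corollary. This assembly is routine bookkeeping; the substantive work lies upstream, in the proof of \cref{thm:ROFUL-satisfies-linear-expansions} (deferred to \cref{subsec:pf-lemma-ROFUL-satisfies-linear-expansions}), which must convert the population-level diversity bound $\lambda_{\min}(\Expect*{\OptimalArm_t {\OptimalArm_t}^\top \IGap_t}) \ge \DivParam$ and optimism in probability into a high-probability $O(1/t)$ bound on $\NormOp{\NearOptimal^\top \SymCovMatrix_t \NearOptimal}$ --- a matrix-concentration argument for adaptively chosen actions restricted to the near-optimal space $\NearOptimal$. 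For the corollary itself, the only subtlety worth flagging is keeping the threshold $t_0$ consistent between \cref{as:margin-condition} and \cref{as:linear-expansion}.
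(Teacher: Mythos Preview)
Your proposal is correct and follows the same route as the paper, which simply states that the corollary ``directly follows from \cref{thm:roful-polylog-regret-bound} and \cref{thm:ROFUL-satisfies-linear-expansions}.'' Your write-up is in fact more careful than the paper's one-line justification: you explicitly flag the need to reconcile the two thresholds $t_0$ (from \cref{as:margin-condition} and from \cref{thm:ROFUL-satisfies-linear-expansions}) and you note that a gain rate is implicitly required for the $\Complexity/(\GapLevel\GainRate*[\GapLevel])$ term to be meaningful, both of which the paper leaves tacit.
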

\begin{rem}
	Note that in terms of dependence in $T$,  by \cref{cor:final-poly-log-result-on-roful} we prove a regret bound that is
	$\Order[]{\log^2(T)}$  under similar conditions as the ones by \cite{goldenshluger2013linear} \cite{bastani2020online}. Since OFUL and TS are special cases of ROFUL, this immediately provides an $\Order[]{\log^2(T)}$ regret bound for OFUL and TS as well. To the best of our knowledge, these results are new.
\end{rem}
\begin{rem}
	\cref{cor:final-poly-log-result-on-roful}  also holds when a more general \emph{$\GeneralMargin$-margin condition} for $\GeneralMargin>0$, as by \cite{goldenshluger2009woodroofe} \cite{bastani2017mostly} that replaces \cref{eq:margin-condition} with
	\begin{align}
		\Prob*{\Gap_t\leq z}\leq c_0z^\GeneralMargin
		\label{eq:alpha-margin-condition}
	\end{align}
	is satisfied. In this case, the term $16\ArmBound^2\Radius^2c_2\log(T)$ in the regret bound would be replaced by
	a term of order
	\[
	\Order*{(\ArmBound\Radius)^{\GeneralMargin+1}\left(1+\int_{1}^{T}t^{-\frac{\GeneralMargin+1}{2}}dt\right)}\,,
	\]
	through the same proof technique.
\end{rem}	

\ACKNOWLEDGMENT{This work was supported by the Stanford Data Science Initiative, and by National Science Foundation CAREER award CMMI: 1554140.}

\bibliography{papers,mypapers,books}

\begin{APPENDICES}
\section{Additional Proofs}
\label{sec:auxi}

\subsection{Proof of \cref{eq:bayesian-finite-complexity}}
\begin{proof}
	Noting that
	\begin{align*}
		\Expect{\Reward(\Arm_i)\Given\History_{t-1},\OptimalArm=\Arm_j}
		&=
		\Inner{\mu_{t,j},\Arm_i}
		~~~~~~~\text{and}~~~~~~~
		\Expect{\Reward(\Arm_i)\Given\History_{t-1}}
		=
		\Inner{\mu_{t},\Arm_i},
	\end{align*}
	we get, by Lemma 3 of \cite{russo2016information}, that for all $i\in[k]$,
	\begin{align*}
		\Arm_i^\top(\mu_{t,j}-\mu_{t})(\mu_{t,j}-\mu_{t})^\top\Arm_i&=
		\Inner{\mu_{t,j}-\mu_{t},\Arm_i}^2\\
		&\leq
		2\sigma^2\KlDiv*{
			\Prob{\Reward(\Arm_i)\Given\History_{t-1},\OptimalArm=\Arm_j}
		}{
			\Prob{\Reward(\Arm_i)\Given\History_{t-1}}
		}.
	\end{align*}
	This in turn implies that
	\begin{align*}
		\Uncertainty_t(\Arm_i)
		&\leq
		2\sigma^2
		\sum_{j=1}^{k}
		\alpha_j
		\KlDiv*{
			\Prob{\Reward(\Arm_i)\Given\History_{t-1},\OptimalArm=\Arm_j}
		}{
			\Prob{\Reward(\Arm_i)\Given\History_{t-1}}
		}\\
		&=
		2\sigma^2
		I\ParenDelim{\OptimalArm;\Reward(\Arm_i)\Given\History_{t-1}}.
	\end{align*}
	For any policy $\Policy\in\PolicySet$, we have
	\begin{align*}
		\Expect{\Uncertainty_t(\ChosenArm_t)}
		&=
		\Expect*{\Expect*{\Uncertainty_t(\ChosenArm_t)\Given\History_{t-1},\ChosenArm_t}}\\
		&=
		\Expect*{\sum_{i=1}^{k}\Prob{\ChosenArm_t=\Arm_i\Given\History_{t-1}}\cdot\Uncertainty_t(\Arm_i)}\\
		&\leq
		2\sigma^2\Expect*{\sum_{i=1}^{k}\Prob{\ChosenArm_t=\Arm_i\Given\History_{t-1}}\cdot I(\OptimalArm;\Reward(\Arm_i)\GivenSymbol\History_{t-1})}.
	\end{align*}
	Using the assumption that $\ChosenArm_t$ is independent of $\OptimalArm$ conditional on $\History_{t-1}$, we can write
	\begin{align*}
		\Expect{\Uncertainty_t(\ChosenArm_t)}
		&\leq
		2\sigma^2\Expect*{\sum_{i=1}^{k}\Prob{\ChosenArm_t=\Arm_i\Given\History_{t-1}}\cdot I(\OptimalArm;\Reward(\ChosenArm_t)\GivenSymbol[\big]\History_{t-1},\ChosenArm_t=\Arm_i)}\\
		&=
		2\sigma^2\Expect*{ I(\OptimalArm;\Reward(\ChosenArm_t)\GivenSymbol\History_{t-1},\ChosenArm_t)}\\
		&\leq
		2\sigma^2\Expect*{ I(\OptimalArm;\Reward(\ChosenArm_t)\GivenSymbol\History_{t-1},\ChosenArm_t)
			+
			I(\OptimalArm;\ChosenArm_t\GivenSymbol\History_{t-1})}\\
		&\leq
		2\sigma^2\Expect*{ I(\OptimalArm;(\Reward(\ChosenArm_t),\ChosenArm_t)\GivenSymbol[\big]\History_{t-1})
		}.
	\end{align*}
	Therefore, by summing up both sides of the above inequalities, we get
	\begin{align*}
		\Uncertainty(\Policy)
		&=
		\Expect*{\sum_{t=1}^{T}\Uncertainty_t(\ChosenArm_t)}\\
		&\leq
		2\sigma^2\Expect*{
			\sum_{t=1}^{T}I(\OptimalArm;(\Reward(\ChosenArm_t),\ChosenArm_t)\GivenSymbol\History_{t-1})
		}\\
		&\leq
		2\sigma^2\sum_{t=1}^{T}\Expect*{
			I(\OptimalArm;(\Reward(\ChosenArm_t),\ChosenArm_t)\GivenSymbol\History_{t-1})
		}\\
		&=
		2\sigma^2\Entropy(\OptimalArm).
	\end{align*}
\end{proof}

\subsection{Proof of \cref{lem:optimism-for-ts}}

First, we state the following lemma.
\begin{lem}
\label{lem:chi-squared-tail}
If $X\sim\chi_d^2$, then for all positive constants $\gamma$, we have
$\Prob*{X\geq 2\Dim+3\gamma} \le\exp\left(-\gamma\right)$.
\end{lem}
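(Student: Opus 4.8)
The statement is a one‑sided sub‑exponential tail bound for a chi‑squared variable, so the plan is a direct Chernoff argument with an explicit (non‑optimized) multiplier. Writing $X=\sum_{i=1}^{\Dim}Z_i^2$ with $Z_1,\dots,Z_\Dim$ i.i.d.\ standard normal, I would first recall the moment generating function $\Expect*{e^{sZ_i^2}}=(1-2s)^{-1/2}$ for $s\in(0,1/2)$, so that $\Expect*{e^{sX}}=(1-2s)^{-\Dim/2}$ on the same range by independence. Markov's inequality then gives, for every $s\in(0,1/2)$,
\begin{align*}
    \Prob*{X\geq 2\Dim+3\gamma}
    &\leq
    e^{-s(2\Dim+3\gamma)}(1-2s)^{-\Dim/2}\\
    &=
    \exp\!\Big(-s(2\Dim+3\gamma)-\tfrac{\Dim}{2}\log(1-2s)\Big).
\end{align*}

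The second step is to choose $s$. Rather than minimizing over $s$, I would simply plug in the convenient value $s=1/3$ (admissible since $1/3<1/2$), for which $1-2s=1/3$ and the exponent becomes $-\tfrac13(2\Dim+3\gamma)+\tfrac{\Dim}{2}\log 3=-\gamma-\Dim\big(\tfrac23-\tfrac12\log 3\big)$. The final step is the elementary check that $\tfrac23-\tfrac12\log 3\ge 0$, i.e.\ $e^{4/3}\ge 3$, which holds since $e^{4/3}\approx 3.79$; hence the exponent is at most $-\gamma$ and the claim $\Prob*{X\geq 2\Dim+3\gamma}\le e^{-\gamma}$ follows.

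I do not expect a genuine obstacle here: the only point requiring (minimal) care is choosing a multiplier for which the coefficient of $\Dim$ in the exponent is nonnegative — any $s$ at or slightly below the minimizer $s^\star=\tfrac{\Dim+3\gamma}{2(2\Dim+3\gamma)}\in(1/4,1/2)$ works, and $s=1/3$ is a clean such choice that avoids optimization entirely. As an alternative self‑contained route one could instead quote the Laurent--Massart inequality $\Prob*{X\ge \Dim+2\sqrt{\Dim x}+2x}\le e^{-x}$, take $x=\gamma$, and use the bound $2\sqrt{\Dim\gamma}\le \Dim+\gamma$ (AM--GM) so that $\Dim+2\sqrt{\Dim\gamma}+2\gamma\le 2\Dim+3\gamma$, which immediately yields the claim; I would keep the Chernoff version as the primary argument.
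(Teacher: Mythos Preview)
Your proposal is correct. Your primary Chernoff argument with the explicit choice $s=1/3$ is fully self-contained and works cleanly; the only subtlety---that the coefficient $\tfrac23-\tfrac12\log 3$ of $\Dim$ is nonnegative---you verify directly. The paper, however, takes precisely your \emph{alternative} route: it invokes Lemma~1 of Laurent--Massart, $\Prob{X\ge \Dim+2\sqrt{\Dim\gamma}+2\gamma}\le e^{-\gamma}$, and uses the AM--GM bound $2\sqrt{\Dim\gamma}\le \Dim+\gamma$ to obtain $\Dim+2\sqrt{\Dim\gamma}+2\gamma\le 2\Dim+3\gamma$. So the two approaches coincide on your fallback, while your main argument is more elementary in that it avoids an external citation at the cost of one numerical check; the Laurent--Massart route is shorter once that lemma is granted.
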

\begin{proof}
The proof follows directly from applying Lemma 1 of \cite{laurent2000adaptive} which gives
\begin{align*}
	\Prob*{X\geq 2\Dim+3\gamma}
	&\leq
	\Prob*{X\geq \Dim+2\sqrt{\Dim \gamma}+2\gamma}\\
	&\leq
	\exp\left(-\gamma\right)\,.
\end{align*}
\end{proof}
\begin{proof}[Proof of \cref{lem:optimism-for-ts}]
First, assume that $6\log 2nT\ge 2d+12\log T$.
Since $\Inflation^{-1}\SymCovMatrix_t^{-\frac12}(\TsSample_t-\Estimator_t)\sim\Normal{0,\Eye_d}$, it follows from \cref{lem:chi-squared-tail} (with $\gamma =4\log T$) that
\begin{align*}
    \Prob*{
        \Norm{\TsSample_t-\Estimator_t}_{\SymCovMatrix_t^{-1}}^2
        \geq
        \Inflation^2[2\Dim+12\log T]
    }
    \leq
    {1}/{T^4}
    \leq
    {1}/{(2T^3)}\,.
\end{align*}
Therefore, combining this with $\log 2nT\ge 2d+12\log T$, we have
\begin{align*}
    \Prob*{\forall\Arm\in\ArmSet_t:\Worth_t(\Arm)\in[\LowerConf_t(\Arm),\UpperConf_t(\Arm)]}
    &=
    \Prob*{\forall\Arm\in\ArmSet_t:\Abs{\Inner[\big]{\TsSample_t-\Estimator_t,\Arm}}\leq\Radius'\Norm{\Arm}_{\SymCovMatrix_t}}\\
    &\geq
    \Prob*{
        \Norm[]{\TsSample_t-\Estimator_t}_{\SymCovMatrix_t^{-1}}^2
        \leq
        \Inflation^2(2\Dim+12\log T)
    }\\
&\geq
    1-    {1}/{(2T^3)}\,.
\end{align*}
In the finite action set case, we provide a different bound using the union bound. For each $\Arm\in\ArmSet_t$, note that $\Inner[]{\TsSample_t-\Estimator_t,\Arm}\sim\Normal{0,\Inflation^2\Norm[]{\Arm}_{\SymCovMatrix_t}^2}$. Hence, we have,
\begin{align*}
    \Prob*{\forall\Arm\in\ArmSet_t:\Worth_t(\Arm)\in[\LowerConf_t(\Arm),\UpperConf_t(\Arm)]}
    &=
    \Prob*{\forall\Arm\in\ArmSet_t:\Abs[]{\Inner[]{\TsSample_t-\Estimator_t,\Arm}}\leq\Radius'\Norm{\Arm}_{\SymCovMatrix_t}}\\
    &\geq
    1-n\cdot
    \Phi\left(-\frac{\Radius'}{\Inflation}\right)\geq
    1-    {1}/{(2T^3)}\,,
\end{align*}
where in the last step we used the fact that $\Phi(-x)\le \exp(-x^2/2)/(x\sqrt{2\pi})$ for all positive $x$.
\end{proof}

\subsection{Proof of  \cref{thm:roful-polylog-regret-bound}}\label{subsec:pf-lemma-roful-polylog-regret-bound}

\begin{proof}
	The main idea is to refine the proof of \cref{thm:general-regret}. We first recall \cref{eq:mid-pf-thm-general-regret-to-use-for-poly-log-results}:
	\begin{align}
		\Regret(T,\PolicyROFUL)
		&\leq
		\frac{\Complexity{}}{\GapLevel\GainRate*[\GapLevel]}
		+
		\frac{\Deviation*[\GapLevel]}{\GapLevel}
		+
		\sum_{t=1}^{T}
		\Expect*{
			\Big(\MeanReward_t(\OptimalArm_t)-\MeanReward_t(\ChosenArm_t)\Big)
			\cdot
			(1-\BigRegret_t)
		}.
		\label{eq:regret-refinement-ineq}
	\end{align}
	where $\BigRegret_t$ denotes $\II(\MeanReward_t(\OptimalArm_t)-\MeanReward_t(\ChosenArm_t)\geq\GapLevel)$. We next improve the upper bound for  each individual term in the above sum. For $t\leq t_0$, where $t_0$ is defined as in \cref{as:linear-expansion}, we use our previous bound
	\begin{align*}
		\Expect*{
			\Big(\MeanReward_t(\OptimalArm_t)-\MeanReward_t(\ChosenArm_t)\Big)
			\cdot
			(1-\BigRegret_t)
		}
		\leq
		\GapLevel\GapProb*[\GapLevel]\,.
	\end{align*}
	Next, we consider $t>t_0$. Whenever $\BigRegret_t=0$, we have $\MeanReward_t(\ChosenArm_t)>\MeanReward_t(\OptimalArm_t)-\GapLevel$, which in turn implies that $\OptimalArm_t,\ChosenArm_t\in\NearOptimal$.
	By recalling the indicator variable $\ITypical_t$ defined in the proof of \cref{thm:roful-gain-rate}, and provided that $(1-\BigRegret_t)\LinExpansion_t\ITypical_t=1$, we have
	\begin{align*}
		\MeanReward_t(\OptimalArm_t)-\MeanReward_t(\ChosenArm_t)
		&\overset{(a)}{\leq}
		\UpperConf_t(\OptimalArm_t)-\LowerConf_t(\ChosenArm_t)\\
		&\leq
		\LowerConf_t(\OptimalArm_t)-\UpperConf_t(\ChosenArm_t)+2\Radius\Big(\Norm[\big]{\OptimalArm_t}_{\SymCovMatrix_t}+\Norm[\big]{\ChosenArm_t}_{\SymCovMatrix_t}\Big)\\
		&\overset{(b)}{\leq}
		\Worth_t(\OptimalArm_t)-\Worth_t(\ChosenArm_t)+2\Radius\Big(\Norm[\big]{\OptimalArm_t}_{\SymCovMatrix_t}+\Norm[\big]{\ChosenArm_t}_{\SymCovMatrix_t}\Big)\\
		&\overset{(c)}{\leq}
		2\Radius\Big(\Norm[\big]{\OptimalArm_t}_{\SymCovMatrix_t}+\Norm[\big]{\ChosenArm_t}_{\SymCovMatrix_t}\Big)\\
		&\overset{(d)}{\leq}
		\frac{4\ArmBound\Radius\sqrt{c_2}}{\sqrt t}.
	\end{align*}
	In the above, (a) holds since $\ITypicalM_t=1$, (b) follows from $\ITypicalW_t=1$, (c) uses the fact that ROFUL chooses the action with maximum worth $\Worth_t(\cdot)$, and (d) is a consequence of $\LinExpansion_t=1$. Now, using this inequality, we can write
	\begin{align*}
		\Expect*{
			\left(\MeanReward_t(\OptimalArm_t)-\MeanReward_t(\ChosenArm_t)\right)
			\cdot
			(1-\BigRegret_t)
		}
		&=
		\Expect*{
			\left(\MeanReward_t(\OptimalArm_t)-\MeanReward_t(\ChosenArm_t)\right)
			\cdot
			(1-\BigRegret_t)(1-\LinExpansion_t\ITypical_t+\LinExpansion_t\ITypical_t)
		}\\
		&\leq
		\Expect*{
			\left(\MeanReward_t(\OptimalArm_t)-\MeanReward_t(\ChosenArm_t)\right)
			\cdot
			(1-\BigRegret_t)\LinExpansion_t\ITypical_t
		}
		+
		\GapLevel\,\Prob*{\LinExpansion_t\ITypical_t=0}\\
		&\leq
		\frac{4\ArmBound\Radius\sqrt{c_2}}{\sqrt t}
		\Prob*{\LinExpansion_t\ITypical_t=1,\,\Gap_t<\frac{4\ArmBound c_2\Radius}{\sqrt t}}
		+
		\frac{\GapLevel(c_1+1)}{2t^2}\\
		&\leq
		\frac{4\ArmBound\Radius\sqrt{c_2}}{\sqrt t}
		\Prob*{\Gap_t<\frac{4\ArmBound c_2\Radius}{\sqrt t}}
		+
		\frac{\GapLevel(c_1+1)}{2t^2}\\
		&\leq
		\frac{16\ArmBound^2\Radius^2c_2c_0}{t}
		+
		\frac{\GapLevel(c_1+1)}{2t^2}\,,
	\end{align*}
where the last step uses \cref{as:margin-condition}.
	This inequality in combination with \cref{eq:regret-refinement-ineq} yields
	\begin{align*}
		\Regret(T,\PolicyROFUL)
		&\leq
		\frac{\Complexity{}}{\GapLevel\GainRate*[\GapLevel]}
		+
		\frac{\Deviation*[\GapLevel]}{\GapLevel}
		+
		\GapLevel t_0\GapProb*[\GapLevel]
		+
		\sum_{t=t_0+1}^{T}
		\left\{\frac{16\ArmBound^2\Radius^2c_2c_0}{t}
		+
		\frac{\GapLevel(c_1+1)}{2t^2}\right\}\\
		&\leq
		\frac{\Complexity{}}{\GapLevel\GainRate*[\GapLevel]}
		+
		\frac{\Deviation*[\GapLevel]}{\GapLevel}
		+
		\GapLevel(t_0\GapProb*[\GapLevel]+c_1+1)
		+
		16\ArmBound^2\Radius^2c_2c_0\log(T)\,,
	\end{align*}
	which is the desired result.
\end{proof}

\subsection{Proof of \cref{lem:ROFUL-satisfies-linear-expansions}}\label{subsec:pf-lemma-ROFUL-satisfies-linear-expansions}

\begin{proof}
	For any $t\in[T]$, let $\IOptim_t$ be a Bernoulli random variable with
	\[
	\Prob*{\IOptim_t=1\Given\HistoryPlus_{t-1}}=\frac{\OptProb}{\OptCoef^2}
	\]
	almost surely such that
	\begin{align}
		\IOptim_t=1~~~~\text{and}~~~~\ITypical_t=1
		\implies
		\MeanReward_t(\OptimalArm_t)-\BaseLevel_t
		\leq
		\frac1\OptCoef
		\Big(\Worth_t(\ChosenArm_t)-\BaseLevel_t\Big)\,.
		\label{eq:optim-implication}
	\end{align}
	The existence of this random variable is guaranteed by the optimism-in-probability assumption. Next, for $t\in[T]$, we have
	\begin{align*}
		\sum_{i=\frac t2}^{t}\ChosenArm_i\ChosenArm_i^\top\cdot(1-\BigRegret_i)\IGap_i\IOptim_i
		\semdefleq
		\SymCovMatrix_t^{-1}.
	\end{align*}
	Moreover, it follows from the definition of $\BigRegret_i$ and $\IGap_i$ that
	\begin{align*}
		\ChosenArm_i\ChosenArm_i^\top\cdot(1-\BigRegret_i)\IGap_i
		=
		{\OptimalArm_i}{\OptimalArm_i}^\top\cdot(1-\BigRegret_i)\IGap_i
	\end{align*}
	for all $i\in[T]$. Therefore, we get 
	\begin{align*}
		\sum_{i=\frac t2}^{t}{\OptimalArm_i}{\OptimalArm_i}^\top\cdot(1-\BigRegret_i)\IGap_i\IOptim_i
		\semdefleq
		\SymCovMatrix_t^{-1}.
	\end{align*}
	By recalling $\DivMatrix_i=\OptimalArm_i{\OptimalArm_i}^\top\cdot\IGap_i$, we have
	\begin{align*}
		\NormOp{\NearOptimal^\top\SymCovMatrix_t\NearOptimal}
		&=
		\lambda_{\min{}}\left(\NearOptimal^\top\SymCovMatrix_t^{-1}\NearOptimal\right)\\
		&\geq
		\lambda_{\min{}}\left(\sum_{i=\frac t2}^{t}\DivMatrix_i\cdot(1-\BigRegret_i)\IOptim_i\right)\\
		&\geq
		\lambda_{\min{}}\left(\sum_{i=\frac t2}^{t}\DivMatrix_i\cdot\IOptim_i\right)
		-
		\lambda_{\max{}}\left(\sum_{i=\frac t2}^{t}\DivMatrix_i\cdot\BigRegret_i\IOptim_i\right).
	\end{align*}
	We now bound each term separately. 
Next, we prove that the smallest singular value of $\sum_{i=\frac t2}^{t}\DivMatrix_i\cdot\IOptim_i$ grows linearly with high probability. Using the noncommutative Bernstein's inequality for $s\geq0$, (e.g., Theorem 1.4 of \cite{tropp2012user}), we get
	\begin{align*}
		\Prob*{
			\NormOp[\bigg]{\sum_{i=\frac t2}^{t}\left(\DivMatrix_i\cdot\IOptim_i-\Expect*{\DivMatrix_i\cdot\IOptim_i}\right)}
			\geq
			s
		}
		\leq
		kd\cdot\exp\left(-\frac{s^2/2}{t\OptProb\DivCovOp^2+s\ArmBound^2/3}\right).
	\end{align*}
	Setting $s:={t\OptProb\DivParam}/{2}$ and applying the triangle inequality yields
	\begin{align}
		\Prob*{
			\NormOp[\bigg]{\sum_{i=\frac t2}^{t}\DivMatrix_i\cdot\IOptim_i}
			\leq
			\frac{t\OptProb\DivParam}{2}
		}
		\leq
		kd\cdot\exp\left(-\frac{t\OptProb\DivParam^2}{8\DivCovOp^2+4\DivParam\ArmBound^2/3}\right).
		\label{eq:gamma-optimism-bound}
	\end{align}
	Our next goal is to prove an upper bound for the largest singular value of $\sum_{i=\frac t2}^{t}\DivMatrix_i\cdot\BigRegret_i\IOptim_i$. We apply the following bound:
	\begin{align*}
		\lambda_{\max{}}\left(\sum_{i=\frac t2}^{t}\DivMatrix_i\cdot\BigRegret_i\IOptim_i\right)
		&\leq
		\ArmBound^2\sum_{i=\frac t2}^{t}\BigRegret_i\IOptim_i.
	\end{align*}
	Using \cref{eq:optim-implication}, we can deduce that, whenever $\ITypical_t=1$ and $\BigRegret_i\IOptim_i=1$, we have
	\begin{align*}
		\GapLevel
		&\leq
		\MeanReward_i(\OptimalArm_i)-\Inner[\big]{\Param,\ChosenArm_i}\\
		&\leq
		\MeanReward_i(\OptimalArm_i)-\LowerConf_i(\ChosenArm_i)\\
		&=
		\MeanReward_i(\OptimalArm_i)-\BaseLevel_i+\BaseLevel_i-\LowerConf_i(\ChosenArm_i)\\
		&\leq
		\frac1\OptCoef(\Worth_i(\ChosenArm_i)-\BaseLevel_i)+\BaseLevel_i-\LowerConf_i(\ChosenArm_i)\\
		&\leq
		\frac1\OptCoef(\Worth_i(\ChosenArm_i)-\BaseLevel_i+\BaseLevel_i-\LowerConf_i(\ChosenArm_i))\\
		&=
		\frac1\OptCoef\left(\Worth_i(\ChosenArm_i)-\LowerConf_i(\ChosenArm_i)\right)\\
		&\leq
		\frac1\OptCoef\left(\UpperConf_i(\ChosenArm_i)-\LowerConf_i(\ChosenArm_i)\right)\\
		&\leq
		\frac{2\Radius}\OptCoef\Norm[\big]{\ChosenArm_i}_{\SymCovMatrix_{i-1}}.
	\end{align*}
	Therefore, we can write
	\begin{align*}
		\ITypical_i\BigRegret_i\IOptim_i
		&\leq
		\left(\frac{2\Radius}{\OptCoef\GapLevel}\cdot\Norm[\big]{\ChosenArm_i}_{\SymCovMatrix_{i-1}}\right)^2
		<
		\frac{4{\Radius}^2}{\OptCoef^2\GapLevel^2}\cdot\Norm[\big]{\ChosenArm_i}_{\SymCovMatrix_{i-1}}^2.
	\end{align*}
	Next, Lemma 10 and Lemma 11 in \cite{abbasi2011improved} yield
	\begin{align*}
		\left(\prod_{i=\frac t2}^{t}\ITypical_i\right)\sum_{i=\frac t2}^{t}\BigRegret_i\IOptim_i
		&\leq
		\sum_{i=\frac t2}^{t}\ITypical_i\BigRegret_i\IOptim_i\\
		&\leq
		\sum_{i=1}^{t}\ITypical_i\BigRegret_i\IOptim_i\\
		&\leq
		\frac{4{\Radius}^2}{\OptCoef^2\GapLevel^2}\cdot\sum_{i=1}^{t}\Norm[\big]{\ChosenArm_i}_{\SymCovMatrix_{i-1}}^2\\
		&\leq
		\frac{4{\Radius}^2kd}{\OptCoef^2\GapLevel^2}\cdot\log\left(\lambda+\frac{t\ArmBound^2}{d}\right).
	\end{align*}
	Hence, it is a direct consequence of \cref{eq:gamma-optimism-bound} that, for any
	$
	t
	\geq
	t_0
	\geq
	\frac{16\ArmBound^2{\Radius}^2kd}{\OptProb\DivParam\OptCoef^2\GapLevel^2}\cdot\log\left(\lambda+\frac{T\ArmBound^2}{d}\right)
	$, we get
	\begin{align}
		\Prob*{
			\NormOp{\NearOptimal^\top\SymCovMatrix_t\NearOptimal}
			\geq
			\frac{4}{t\OptProb\DivParam}
		}
		&\leq
		kd\cdot\exp\left(-\frac{t\OptProb\DivParam^2}{8\DivCovOp^2+4\DivParam\ArmBound^2/3}\right)
		+
		\Prob*{\prod_{i=\frac t2}^{t}=0}\\
		&\leq
		kd\cdot\exp\left(-\frac{t\OptProb\DivParam^2}{8\DivCovOp^2+4\DivParam\ArmBound^2/3}\right)
		+
		\frac{2}{t^2}.
	\end{align}
	We prove that for sufficiently large $t$, the right-hand side of the above inequality is bounded above by $3/t^2$. This is equivalent to
	\begin{align*}
		\frac{8\DivCovOp^2+4\DivParam\ArmBound^2/3}{\OptProb\DivParam^2}
		\leq
		\frac{t}{\log(t^2kd)}
		=
		\frac{t}{2\log(t)+\log(kd)}.
	\end{align*}
	Using \cref{lem:implicit-log-inequality} below, we infer that this is satisfied for all $t\geq t_0$.
\end{proof}

\begin{lem}
	Let $a\geq3$ be given. Then, for all $t\geq3a\log(a)$, we have
	$
	\frac{t}{\log(t)}\geq a
	$.
	\label{lem:implicit-log-inequality}
\end{lem}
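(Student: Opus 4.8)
The plan is to rewrite the desired inequality $t/\log t \ge a$ in the equivalent form $t \ge a\log t$ (valid since $t \ge 3a\log a > 1$, so $\log t > 0$) and then reduce the claim to a single boundary check via monotonicity. First I would introduce the auxiliary function $f(t) := t - a\log t$ for $t > 0$ and compute $f'(t) = 1 - a/t$, which is nonnegative precisely when $t \ge a$. Since $a \ge 3$ gives $3\log a \ge 3\log 3 > 1$, the left endpoint $3a\log a$ of the range in question exceeds $a$, so $f$ is nondecreasing on $[3a\log a,\infty)$. Consequently it suffices to verify $f(3a\log a) \ge 0$, since then $f(t) \ge f(3a\log a) \ge 0$ for all $t \ge 3a\log a$.

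Next I would expand this boundary value using $\log(3a\log a) = \log 3 + \log a + \log\log a$, obtaining $f(3a\log a) = 3a\log a - a\log(3a\log a) = a\bigl(2\log a - \log 3 - \log\log a\bigr)$. It then remains to show $2\log a \ge \log 3 + \log\log a$. Here I would use two elementary facts: $\log 3 \le \log a$ because $a \ge 3$ and $\log$ is increasing; and $\log\log a < \log a$ because $\log a > 0$ together with the universal bound $\log x < x$ applied at $x = \log a$. Adding these two inequalities gives $\log 3 + \log\log a < 2\log a$, which completes the boundary check and hence the lemma.

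There is no genuine obstacle in this argument; the only mild points requiring care are (i) confirming that the monotonicity range is nonempty in the relevant sense, i.e.\ that $3a\log a \ge a$, equivalently $\log a \ge 1/3$, which is exactly where the hypothesis $a \ge 3$ is used, and (ii) making sure $\log\log a$ is well defined and positive, which again follows from $a \ge 3 > 1$ so that $\log a > 0$.
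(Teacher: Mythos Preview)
Your proof is correct and follows essentially the same approach as the paper: both arguments establish monotonicity of the relevant function (the paper uses $t/\log t$ directly, you use the equivalent $t - a\log t$) and then reduce to a boundary check at $t_0 = 3a\log a$, which in both cases amounts to the inequality $3a\log a \le a^3$. Your justification of that boundary inequality via $\log 3 \le \log a$ and $\log\log a < \log a$ is a bit more explicit than the paper's one-line estimate $\log(3a\log a) \le \log(a^3)$, but the content is identical.
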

\begin{proof}
	First, note that $f:t\mapsto\frac{t}{\log(t)}$ is an increasing function of $t$ for all $t\geq e$. To see this, we compute the derivative of $f$ as follows:
	\begin{align*}
		f'(t)=\frac{\log(t)-1}{\log^2(t)} \geq 0.
	\end{align*}
	Next, setting $t_0=3a\log(a)$, we have
	\begin{align*}
		f(t_0)
		&=
		\frac{3a\log(a)}{\log(3a\log(a))}\\
		&\geq
		\frac{3a\log(a)}{\log(a^3)}\\
		&=
		a.
	\end{align*}
\end{proof}

\end{APPENDICES}

\end{document}